\newcommand{\modelname}[1]{{\fontfamily{qpl}\selectfont {\small #1}}}
\newlength{\mybarwidth}
\newcommand{\bx}{\mathbf{x}}
\newcommand{\bh}{\mathbf{h}}
\theoremstyle{plain}
\newtheorem{theorem}{Theorem}[section]
\newtheorem{corollary}[theorem]{Corollary}
\theoremstyle{definition}
\theoremstyle{remark}
\pgfplotsset{compat=1.18}
\newcommand{\Sone}{C6}
\newcommand{\Stwo}{C2}
\newcommand{\Sthree}{C9}
\newcommand{\Sfour}{C3}
\newcommand{\Sfifty}{C0}
\newcommand{\stwoaunbiasedthm}{The \modelname{Some-to-All} estimator is an unbiased estimator of Eq. \eqref{eq:miselbo}.}
\newcommand{\newstwoaunbiasedcorollary}{The expected value of the \modelname{Some-to-All} estimator is a lower bound on the marginal log-likelihood, \begin{align*}
    \mathbb{E}\left[\widetilde{\mathcal{L}}_\text{\modelname{S2A}}\right] \leq \log p_\theta(x).
\end{align*}}
\newcommand{\stwoslowerboundmiselbothm}{The expected value of the \modelname{Some-to-Some} estimator is a lower bound on MISELBO, i.e.,
    \begin{align*}
        \mathbb{E}\left[
    \widetilde{\mathcal{L}}_\text{\modelname{S2S}}
        \right]\leq \mathcal{L}_\textnormal{MIS}.
    \end{align*}}
\newcommand{\stwoslowerboundmarglikcorollary}{The expected value of the \modelname{Some-to-Some} estimator is a lower bound on the marginal log-likelihood,
    \begin{align*}
        \mathbb{E}\left[
    \widetilde{\mathcal{L}}_\text{\modelname{S2S}}
        \right]\leq \log p_\theta(x).
    \end{align*}}
\icmltitlerunning{Efficient Mixture Learning in Black-Box Variational Inference}
\newtheoremstyle{nonitalic}
  {3pt}
  {3pt}
  {}
  {}
  {\bfseries}
  {:}
  {.5em}
  {}
\theoremstyle{nonitalic}
\pgfplotsset{
    symlog style/.style={
        y coord trafo/.code=\pgfmathparse{symlog(\pgfmathresult)},
        y coord inv trafo/.code=\pgfmathparse{symexp(\pgfmathresult)},
        yticklabel style={/pgf/number format/.cd, fixed},
        yticklabel=\pgfmathparse{symexp(\tick)}\pgfmathresult,
    },
    /pgf/declare function={
        symlog(\x)= (\x > 0) * ln(max(\x,1)) + (\x < 0) * -ln(max(-\x,1));
        symexp(\x)= (\x > 0) * exp(\x)   + (\x < 0) * -exp(-\x);
    }
}
\pgfplotsset{compat=1.17}
\tikzset{
        /tikz/every even column/.append style={column sep=1em},
    }
\pgfplotsset{
    every axis/every ticks/.append style={fontsize=\small},
    every axis legend/.append style={
            font=\sc\small,
            {/tikz/every even column/.append style={column sep=1em}},
        }
}
\begin{document}

\twocolumn[
\icmltitle{Efficient Mixture Learning in Black-Box Variational Inference}




\icmlsetsymbol{equal}{*}

\begin{icmlauthorlist}
\icmlauthor{Alexandra Hotti}{equal,yyy,comp,klarna}
\icmlauthor{Oskar Kviman}{equal,yyy,comp}
\icmlauthor{Ricky Molén}{yyy,comp}
\icmlauthor{Víctor Elvira}{xxx}
\icmlauthor{Jens Lagergren}{yyy,comp}
\end{icmlauthorlist}

\icmlaffiliation{yyy}{KTH Royal Institute of Technology}
\icmlaffiliation{comp}{Science for Life Laboratory}
\icmlaffiliation{klarna}{Klarna}
\icmlaffiliation{xxx}{University of Edinburgh}

\icmlcorrespondingauthor{Alexandra Hotti}{hotti@kth.se}
\icmlcorrespondingauthor{Oskar Kviman}{okviman@kth.se}

\icmlkeywords{Machine Learning, ICML}
 
\vskip 0.3in
]




\printAffiliationsAndNotice{\icmlEqualContribution} 

\begin{abstract}

Mixture variational distributions in black box variational inference (BBVI) have demonstrated impressive results in challenging density estimation tasks.
However, currently scaling the number of mixture components can lead to a linear increase in the number of learnable parameters and a quadratic increase in inference time due to the evaluation of the evidence lower bound (ELBO). Our two key contributions address these limitations. First, we introduce the novel Multiple Importance Sampling Variational Autoencoder (\modelname{MISVAE}), which amortizes the mapping from input to mixture-parameter space using one-hot encodings. Fortunately, with \modelname{MISVAE}, each additional mixture component incurs a negligible increase in network parameters. Second, we construct two new estimators of the ELBO for mixtures in BBVI, enabling a tremendous reduction in inference time with marginal or even \textit{improved} impact on performance. Collectively, our contributions enable scalability to hundreds of mixture components and provide superior estimation performance in shorter time, with fewer network parameters compared to previous Mixture VAEs. Experimenting with \modelname{MISVAE}, we achieve astonishing, SOTA results on MNIST. Furthermore, we empirically validate our estimators in other BBVI settings, including Bayesian phylogenetic inference, where we improve inference times for the SOTA mixture model on eight data sets. 
\end{abstract}

\section{Introduction}

Recent advancements in variational inference (VI) have focused on enhancing performance through more sophisticated network architectures, formulation of flexible priors and variational posteriors, and the exploration of alternative formulations of the evidence lower bound (ELBO), the typical objective function in VI. Competitive developments include normalizing flows (NFs; \citet{rezende2015variational, papamakarios2021normalizing}), hierarchical models \cite{burda2015importance, sonderby2016ladder,vahdat2020nvae}, autoregressive models \cite{van2016pixel}, the VampPrior \cite{tomczak2018vae}, and the importance weighted ELBO (IWELBO;  \citet{burda2015importance}).

Lately, using mixture models as variational distributions has garnered increased attention \citep{nalisnick2016approximate, kucukelbir2017automatic, morningstar2021automatic, kviman2022multiple, kviman2023cooperation}. Specifically, \citet{kviman2022multiple} developed a formulation of the ELBO for uniformly weighted mixtures inspired by multiple importance sampling (MIS; see \citet{elvira2019generalized} for a review), termed MISELBO,\footnote{An alternative naming of this lower bound emerges in the work of \citet{morningstar2021automatic}, where the sampling scheme is interpreted as stratified sampling. This perspective leads to the names SELBO or SIWELBO.} \begin{align}
\label{eq:miselbo}
    &\mathcal{L}_\textnormal{MIS} =\frac{1}{A}\sum_{a=1}^A \mathbb{E}_{q_{\phi_a}(z_a|x)}\left[
    \log \frac{p_\theta(x, z_{a})}{\frac{1}{A}\sum_{a^\prime=1}^A q_{\phi_{a^\prime}}(z_{a}|x)}
    \right],
\end{align} where $z_a$ is a latent variable, $x$ is observed data, \(\theta\) represents the parameters of the generative model \(p_\theta(x, \cdot)\), $A$ is the number of mixture components, and \(\phi_{a}\) denotes the variational parameters of the \(a\)-th mixture component \(q_{\phi_a}(\cdot|x)\).

Mixtures are distinguished by their simple yet expressive nature, as well as their theoretical foundation. In the BBVI setting, they have achieved state-of-the-art (SOTA) results in applications like image processing and phylogenetics \cite{kviman2023cooperation, kviman2023improved}. However, computational complexities (parameter costs and inference times) hinder algorithm developers from utilizing a large $A$, ultimately leaving the full potential (e.g., their universal approximator property \cite{kostantinos2000gaussian}) untapped.

In BBVI-based mixture learning, the number of learnable parameters typically increases linearly with $A$ at an unpractical rate. For example, in \citet{kviman2022multiple, kviman2023cooperation}, a naive approach is used where each new component allocates a separate encoder network, while in \citet{kviman2023improved}, there is one Bayesian network per component. Moreover,

\begin{figure}[H]
\centering
\begin{minipage}{0.32\textwidth}
\centering
\begin{tikzpicture}[trim axis left, trim axis right]
  \begin{axis}[
    xlabel={$A$},
    ylabel={$-\log p_{\theta}(x)$},
    major grid style={dotted,black},
    ymajorgrids=true,
    width=0.999\linewidth,
    title = FashionMNIST,
    ymin=220.5, ymax=226,
  ]
    \addplot[color=C1, mark options={solid}, mark=*, very thick, mark options={solid, fill opacity=0.69}, name path=mean] coordinates {
      (1,225.0730738) (10,223.4221537) (20,222.815659)
      (50,222.5405424) (75,222.2620291015625) (100,222.14271171875)
      (200,221.6042012) (400,221.2495875)
    };
    \addplot[name path=upper, draw=none] coordinates {
    (1,225.0730738+ 0.0685) 
    (10,223.4221537 + 0.132) 
    (20,222.815659 + 0.0582)
    (50,222.5405424 + 0.0671) 
    (75,222.2620291015625 + 0.0909) 
    (100,222.14271171875 + 0.0600)
    (200,221.6042012 + 0.0561) 
    (400,221.2495875 + 0)
    };
    \addplot[name path=lower, draw=none] coordinates {
    (1,225.0730738- 0.0685) 
    (10,223.4221537 - 0.132) 
    (20,222.815659 - 0.0582)
    (50,222.5405424 - 0.0671) 
    (75,222.2620291015625 - 0.0909) 
    (100,222.14271171875 - 0.0600)
    (200,221.6042012 - 0.0561) 
    (400,221.2495875 - 0)
    };
    \addplot[\Sfour!40, fill opacity=0.5] fill between[of=upper and lower];
    \addplot[dashed, color=C2, very thick] coordinates {(0,0)};
\draw [dashed, color=C2, very thick] (axis cs:-100,222.38) -- (axis cs:500,222.38);
  \end{axis}
\end{tikzpicture}
\end{minipage}
\begin{minipage}{0.32\textwidth}
\centering
\begin{tikzpicture}[trim axis left, trim axis right]
  \begin{axis}[
    xlabel={$A$},
    ylabel={$-\log p_{\theta}(x)$},
    title = MNIST,
    major grid style={dotted,black},
    ymajorgrids=true,
    width=0.999\linewidth,
    ytick={74,75,76,77,78,79,80},
    ymin=73.5, ymax=80,
  ]
    \addplot[color=C1, mark options={solid}, mark=*, very thick, mark options={solid, fill opacity=0.69}] coordinates {
      (1,79.64566655) (10,78.01734403) (20,77.31235078)
      (50,76.66027326) (75,76.2737875) (100,76.03476406)
      (200,75.43070625) (400,74.98859375) (600, 74.39796909179688) (800, 74.07091489257813) (1000, 74.12734091796875)
    };
    \addplot[dashed, color=C9, very thick] coordinates {(0,0)};
\draw [dashed, color=C9, very thick] (axis cs:-100,76.93) -- (axis cs:120000,76.93);

  \end{axis}
\end{tikzpicture}
\end{minipage}
\fbox{
\begin{tabular}{@{}l@{\hspace{10pt}}l@{\hspace{15pt}}l@{}}
\tikz\draw[C1,fill=C1!69] (0,0) circle (.7ex); & \modelname{MISVAE} w/ $\mathrm{S2A}$ & \\
\raisebox{0.5ex}{\tikz\draw[very thick, C9] (0,0) -- (0.75,0) [dash pattern=on 4pt off 4pt];} & \modelname{CR-NVAE} & \\
\raisebox{0.5ex}{\tikz\draw[very thick, \Stwo] (0,0) -- (0.75,0) [dash pattern=on 4pt off 4pt];} & \modelname{Composite} \modelname{SEMVAE} & \\
\end{tabular}}

\caption{\textbf{SOTA Performance with Small and Efficient Networks:} NLL values for \modelname{MISVAE} trained with the $\mathrm{S2A}$ estimator with $S=1$ and a gradually increasing $A$.}
\label{fig:big_smnist}
\end{figure}
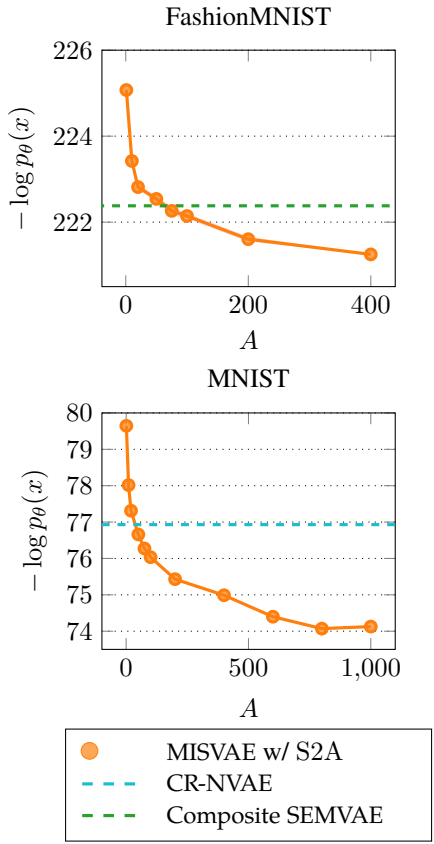
by inspecting Eq. \eqref{eq:miselbo}, it is clear that the evaluation of the MISELBO objective, and thus the inference time, scales quadratically with $A$. 

We make two contributions in this work, each addressing the aforementioned computational complexities. First, we introduce the Multiple Importance Sampling VAE (\modelname{MISVAE}), a novel VAE architecture that efficiently amortizes the mapping from data to mixture parameters (see Fig. \ref{fig:misvae-architecture}). With our one-hot-encoding-based parameterization strategy, all network weights in a single encoder are shared among the $A$ mixture components. This is a novel construction, as, previously, either no \cite{kviman2022multiple, kviman2023cooperation} or only a subset \cite{nalisnick2016approximate} of the encoder weights have been shared. \modelname{MISVAE} is described in detail in Sec. \ref{sec:misvae}.

Our second contribution is inspired by the plethora of techniques for sampling from mixture models, from the MIS literature \cite{elvira2019generalized}. To make the evaluation of MISELBO objective more effective, we extend two established MIS schemes to develop two novel estimators of the MISELBO: the \modelname{Some-to-All} ($\mathrm{S2A}$) and \modelname{Some-to-Some} ($\mathrm{S2S}$) estimators. Both estimators sample a subset of $S<A$ unique components, from which the latent variables are subsequently simulated from. This results in estimations of a subset of the expectations in Eq. \eqref{eq:miselbo}. The two estimators differ, however, in their formulation of the denominator in Eq. \eqref{eq:miselbo} and, thus, in their theoretical properties. In Sec. \ref{sec:estimators}, we clearly explain how to implement the estimators, how they relate to popular MIS schemes, provide their respective time complexities, and give robust theoretical justifications of both estimators. 

We have constrained our work to uniformly weighted mixtures. This is justified by existing analyses in the BBVI literature. Specifically, \citet{morningstar2021automatic} observed that inferring parameters for \textit{weighted} mixture components often leads to mode collapse. To mitigate this issue, they suggested using the Importance Weighted ELBO (IWELBO; \citet{burda2015importance}). However, using the IWELBO objective presents other challenges. Notably, it does not allow for training VAEs using the KL warm-up scheme, which is crucial for achieving SOTA NLL results (see e.g., \citet{tomczak2018vae}). Additionally, this approach can adversely affect the learning of encoder nets  \cite{rainforth2018tighter}.

The $\mathrm{S2A}$ and $\mathrm{S2S}$ estimators are applicable to any BBVI mixture problem (i.e., not constrained to VAEs), and, as we demonstrate in various BBVI scenarios in Sec. \ref{sec:experiments}, they can heavily decrease the parameter inference time. For VAEs, when paired with \modelname{MISVAE}, we can push the limits of $A$ in order to achieve astonishing marginal log-likelihood results on MNIST and FashionMNIST (see Fig. \ref{fig:big_smnist}).

To summarize, our contributions are
\begin{itemize}
    \item \textbf{\modelname{MISVAE}:}  We introduce \modelname{MISVAE}. A novel Mixture VAE architecture that efficiently maps data to mixture parameters, significantly improving the scalability w.r.t. the number of mixture components, $A$.
    \item \textbf{\modelname{Some-to-Some}:} We propose $\mathrm{S2S}$, a novel estimator of MISELBO (Eq. \eqref{eq:miselbo}). This estimator enables enhanced performance relative to \modelname{MISVAE} by allowing an increase in $A$, while preserving the same inference time per epoch.
    \item \textbf{\modelname{Some-to-All}:} We introduce $\mathrm{S2A}$, which we prove to be an unbiased estimator of MISELBO for any $S<A$. This approach makes it possible to increase the total number of mixtures $A$ with only a small additional computational burden.

\end{itemize}

 \section{Related Work}

Mixture VAEs can be traced back to \citet{nalisnick2016approximate}, who employed a Gaussian mixture model with a Dirichlet prior on the mixture weights, and inferred these weights through a neural network mapping from the data, \(x\), to the simplex. Their encoder employed $A$ separate mappings from a hidden layer in the encoder to the different mixture parameters. As such, their architecture scales poorly to large $A$ in terms of number of network parameters. Furthermore, \citet{roeder2017sticking} utilized a weighted mixture ELBO, for training a VAE using stop gradients and different sampling strategies. Yet, their application was limited to a toy dataset.

Drawing inspiration from the MIS literature, \citet{kviman2022multiple} introduced the concept of the MISELBO, offering a straightforward method for evaluating the mixture ELBO. However, in their approach, individual mixture components were trained separately and aggregated only during the evaluation of the MISELBO objective. As a result, each component tended to gravitate towards the mode of the posterior, rather than all components collaboratively covering all regions of the posterior. Expanding on these concepts, \citet{kviman2023cooperation} devised methods for the joint training of all mixture components, which were subsequently applied by \citet{kviman2023improved}. Collectively, achieving SOTA performance on datasets such as MNIST, FashionMNIST, and a range of phylogenetic datasets. Nevertheless, despite their impressive empirical performance, scaling these architectures to variational mixtures with more than ten components posed significant computational challenges. Here, we address this limitation, enabling the full potential of variational mixtures to be realized by significantly reducing the computational demands of scaling to a larger number of components. The idea of increasing the number of mixture components for variance reduction while limiting the computational complexity in MIS was introduced  in \cite{elvira2015efficient} and further developed in \cite{elvira2016heretical,elvira2016multiple}.

 \section{Background}
\label{sec:background}

\paragraph{Estimating NLL} The estimate of the IWELBO,
\begin{equation}
\label{eq:iwelbo}
    \mathcal{L}^L_\textnormal{IWELBO} = \mathbb{E}_{q_{\phi}(z|x)}\left[
    \log \frac{1}{L}\sum_{\ell=1}^L \frac{p_\theta(x, z_{\ell})}{q_{\phi}(z_{\ell}|x)}
    \right],
\end{equation}
where $L$ is the number of importance samples, is often used to estimate the marginal log-likelihood, $\log p_\theta(x)$. The \textit{negative log-likelihood} (NLL) refers to $\text{-} \log p_\theta(x)$. It is possible to estimate the NLL by using an importance-weighted version of MISELBO \cite{kviman2022multiple}.

\paragraph{MIS} In the field of importance sampling (IS), MIS refers to techniques with more than one proposal/importance sampler \cite{elvira2021advances}. MIS inherits strong theoretical guarantees and many methodological developments have been made \cite{Veach95,Owen00,sbert2022generalizing}. In this line of research, we rely on  MIS schemes where the sampling is done either from a mixture or by deterministically choosing the proposal from a set of mixture components. In both cases, the weights are constructed in a way that reduces variances of the IS estimators (see \citet{elvira2019generalized} for more details).

Due to the logarithm in the ELBO, many theoretical insights gathered in MIS do not generalize to VI. However, recognizing that MISELBO is an expectation to be estimated by a mixture establishes clear connections between popular mixture sampling techniques, as described in \citet{elvira2019generalized}, and the estimators used in BBVI mixture learning.

\paragraph{Bayesian phylogenetics} In Bayesian phylogenetic inference, the posterior distribution over branch lengths and tree topologies is approximated jointly, given the observed sequence data (e.g., DNA). In Appendix \ref{app:vbpi}, we define the posterior and give more details on the generative model. 

Many recent works have applied modern machine learning techniques to Bayesian phylogenetics \cite{zhang2018advances, zhangvbpinf, moretti2021variational, zhang2023learnable, zhou2023phylogfn}. Notably, \citet{kviman2023improved} constructed a mixture of variational phylogenetic posterior approximations, achieving SOTA results.

However, in \citet{kviman2023improved}, the inference time scales poorly with $A$, making it infeasible to learn mixture models with many components. In Sec. \ref{sec:experiments}, we instead apply our two new estimators for learning the mixture parameters, decreasing the computational costs of the SOTA BBVI method. This takes the field closer to realistic application of mixture models in Bayesian phylogenetic and  domains with even larger state spaces. These applications have traditionally been viewed as computationally demanding and, in practice, considered intractable within a Bayesian framework (e.g., species-tree reconciliation \cite{aakerborg2009simultaneous})

\section{Efficient Estimation of MISELBO}
\label{sec:estimators}
We now walk through the three approaches we consider for estimating Eq. \eqref{eq:miselbo}, namely the \modelname{All-to-All}, and our two novel estimators, the \modelname{Some-to-All}, and \modelname{Some-to-Some} estimators.

\paragraph{\modelname{All-to-All}} When implementing the \modelname{All-to-All} (\modelname{A2A}) estimator, a single latent variable is sampled from each of the $A$ available components, resulting in
\begin{equation}    \widetilde{\mathcal{L}}_\textnormal{\modelname{A2A}} = \frac{1}{A}\sum_{a=1}^A \log \frac{p_{\theta}(x|z_{a})p_\theta(z_{a})}{\frac{1}{A}\sum_{{a^\prime}=1}^A q_{\phi_{a^\prime}}(z_a|x)}, 
\end{equation}
where $z_{a} \sim q_{\phi_{a}(z_a|x)}$. 

The computational cost of this estimator is proportional $A\times A$ and it connects to the N3 scheme in \citet{elvira2019generalized} since all $A$ components are used in the simulation of the $S=A$ samples and also appear in the denominator.

\paragraph{\modelname{Some-to-All}}

There are $A$ components in total, and for a given data point (or batch) we sample a subset, $\Phi$, of $S$ unique components (without replacement). No component is more likely to be selected \textit{a priori}, and so we can consider sampling the subsets from a uniform distribution over all ${A \choose S}$ possible subsets, $\varphi(\Phi)$ (see Appendix \ref{app:exp_values}). 

Then, by obtaining $\Phi\sim \varphi(\Phi)$, we construct the \modelname{Some-to-All} (\modelname{S2A}) estimator
\begin{equation}
    \widetilde{\mathcal{L}}_\text{\modelname{S2A}} := \frac{1}{S}\sum_{s=1}^S \log \frac{p_{\theta}(x|z_s)p_\theta(z_s)}{\frac{1}{A}\sum_{a=1}^A q_{\phi_a}(z_s|x)}, 
\end{equation}
where $z_s \sim q_{\phi_{s}(z_s|x)}$ for all $\phi_s \in \Phi$. 

This estimator is linked to the R3 scheme in \citet{elvira2019generalized} since in both cases a subset of components is used to simulate the samples, while the unweighted mixture of all components appear in the denominator. The difference is that the R3 scheme samples exactly $S=A$ samples by selecting the components with multinomial resampling with replacement, while our \modelname{S2A} estimator samples $S<A$ samples by selecting the components with multinomial resampling without replacement instead. 

A beneficial property of the S2A estimator is that it allows for sampling mixture components without replacement, resulting in a lower variance gradient estimator compared to when sampling with replacement. Moreover, the computational cost of the \modelname{S2A} estimator is $S\times A$ and it is an unbiased estimator of Eq. \eqref{eq:miselbo}. 
\begin{theorem}
    \label{stwoaunbiasedcorollary}
\stwoaunbiasedthm  
\end{theorem}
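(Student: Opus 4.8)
The plan is to evaluate $\mathbb{E}[\widetilde{\mathcal{L}}_\text{\modelname{S2A}}]$ by iterating the two sources of randomness via the tower property: first the expectation over the latent samples conditioned on a fixed selected subset $\Phi$, and then the expectation over the random subset $\Phi \sim \varphi(\Phi)$. To organize this, I would write $g_a$ for the $a$-th summand of MISELBO in Eq. \eqref{eq:miselbo}, i.e. $g_a := \mathbb{E}_{q_{\phi_a}(z_a|x)}\bigl[\log \tfrac{p_\theta(x, z_a)}{\frac{1}{A}\sum_{a'=1}^A q_{\phi_{a'}}(z_a|x)}\bigr]$, so that $\mathcal{L}_\text{MIS} = \frac{1}{A}\sum_{a=1}^A g_a$. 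The first step is to note that, conditioned on $\Phi$, the samples $z_s$ are drawn independently from their respective components $q_{\phi_s}$, and the denominator $\frac{1}{A}\sum_{a=1}^A q_{\phi_a}(z_s|x)$ in the estimator ranges over \emph{all} $A$ components irrespective of which subset was drawn. Consequently, by linearity of expectation, each conditional per-component term coincides exactly with $g_a$, giving $\mathbb{E}\bigl[\widetilde{\mathcal{L}}_\text{\modelname{S2A}} \mid \Phi\bigr] = \frac{1}{S}\sum_{a \in \Phi} g_a$.

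The second step is to take the outer expectation over $\Phi$. Rewriting the sum over $\Phi$ with an inclusion indicator, $\mathbb{E}_\Phi\bigl[\frac{1}{S}\sum_{a\in\Phi} g_a\bigr] = \frac{1}{S}\sum_{a=1}^A g_a\, \Pr[a\in\Phi]$. Since $\varphi$ is uniform over the $\binom{A}{S}$ subsets, a symmetry/counting argument gives the marginal inclusion probability $\Pr[a\in\Phi] = \binom{A-1}{S-1}/\binom{A}{S} = S/A$ for every component $a$. Substituting this in yields $\frac{1}{S}\sum_{a=1}^A g_a \cdot \frac{S}{A} = \frac{1}{A}\sum_{a=1}^A g_a = \mathcal{L}_\text{MIS}$, which establishes unbiasedness for every $S < A$ (and in fact for all $1 \le S \le A$).

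The step I expect to carry the genuine weight is the clean bookkeeping of the two-stage randomness: making precise that the $z_s$ are simulated \emph{conditionally} on the selected $\Phi$ so that the tower property applies, and justifying the uniform inclusion probability $S/A$ rigorously rather than merely by appeal to symmetry. The crucial structural observation that makes everything go through is that the estimator's denominator is subset-independent, which decouples the latent-sample expectation from the combinatorics of the draw; once this is in place, the argument reduces to a linearity-of-expectation computation together with the elementary fact about uniform inclusion probabilities (which I would state and defer to Appendix \ref{app:exp_values}). No interchange-of-limits or concentration arguments are needed, so beyond this careful accounting the proof is essentially immediate.
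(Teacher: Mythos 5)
Your proof is correct and takes essentially the same route as the paper's: both use the two-stage (tower) decomposition—inner expectation over latents conditioned on the subset, exploiting that the denominator ranges over all $A$ components independently of $\Phi$, then outer expectation over the uniform subset draw—and both rest on the identity $\binom{A-1}{S-1}\big/\binom{A}{S} = S/A$. The only cosmetic difference is that the paper counts each component's occurrences across all $\binom{A}{S}$ subsets explicitly and swaps the order of summation, whereas you package the same fact as a marginal inclusion probability via indicators.
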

\begin{proof}
    See Appendix \ref{app:exp_values}.
\end{proof}

Furthermore, its expectation is a lower bound on the marginal log-likelihood.

\begin{corollary}\label{stwoalower_bound}
\newstwoaunbiasedcorollary    
\end{corollary}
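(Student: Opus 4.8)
The plan is to reduce the claim to the fact that MISELBO is itself a valid evidence lower bound, and then invoke the unbiasedness result already established in Theorem \ref{stwoaunbiasedcorollary}. That theorem gives $\mathbb{E}[\widetilde{\mathcal{L}}_\text{\modelname{S2A}}] = \mathcal{L}_\textnormal{MIS}$, so it suffices to prove $\mathcal{L}_\textnormal{MIS} \leq \log p_\theta(x)$; the corollary then follows by chaining the equality from the theorem with this inequality.

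First I would rewrite the outer uniform average together with the per-component expectations in Eq. \eqref{eq:miselbo} as a single expectation under the uniformly weighted mixture proposal. Define $\bar q(z|x) := \frac{1}{A}\sum_{a'=1}^A q_{\phi_{a'}}(z|x)$. Since drawing the component index $a$ uniformly and then sampling $z_a \sim q_{\phi_a}(\cdot|x)$ is exactly sampling $z \sim \bar q(\cdot|x)$, we obtain
\[
\mathcal{L}_\textnormal{MIS} = \mathbb{E}_{\bar q(z|x)}\!\left[\log \frac{p_\theta(x,z)}{\bar q(z|x)}\right].
\]
This displays MISELBO as the standard single-proposal ELBO with proposal $\bar q$, which is the key structural observation.

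Next I would apply Jensen's inequality via the concavity of the logarithm:
\[
\mathbb{E}_{\bar q(z|x)}\!\left[\log \frac{p_\theta(x,z)}{\bar q(z|x)}\right] \leq \log \mathbb{E}_{\bar q(z|x)}\!\left[\frac{p_\theta(x,z)}{\bar q(z|x)}\right] = \log \int p_\theta(x,z)\,dz = \log p_\theta(x),
\]
where the middle step uses that $\bar q$ cancels in the importance ratio, leaving the marginal $p_\theta(x)$. Equivalently, one can write $\mathcal{L}_\textnormal{MIS} = \log p_\theta(x) - \mathrm{KL}\!\left(\bar q(z|x)\,\|\,p_\theta(z|x)\right)$ and invoke nonnegativity of the KL divergence. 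Combining with the theorem yields $\mathbb{E}[\widetilde{\mathcal{L}}_\text{\modelname{S2A}}] = \mathcal{L}_\textnormal{MIS} \leq \log p_\theta(x)$.

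I do not expect a serious obstacle, since the heavy lifting is the unbiasedness in Theorem \ref{stwoaunbiasedcorollary}; the remaining content is the classical ELBO bound applied to the mixture proposal $\bar q$. The only points requiring mild care are the interchange of the uniform average and the component-wise expectations into a single mixture expectation, and the integrability conditions needed to apply Jensen (finiteness of $\mathcal{L}_\textnormal{MIS}$ together with support compatibility, so that $\bar q(z|x) > 0$ wherever $p_\theta(x,z) > 0$), both of which hold under the standard absolute-continuity assumptions implicit in the definition of MISELBO.
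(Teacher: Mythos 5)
Your proposal is correct and takes essentially the same route as the paper: both proofs chain the unbiasedness result of Theorem \ref{stwoaunbiasedcorollary}, $\mathbb{E}\bigl[\widetilde{\mathcal{L}}_\text{\modelname{S2A}}\bigr] = \mathcal{L}_\textnormal{MIS}$, with the inequality $\mathcal{L}_\textnormal{MIS} \leq \log p_\theta(x)$. The only difference is that the paper cites \citet{kviman2022multiple} for that inequality, whereas you re-derive it inline (correctly) by viewing the uniform mixture $\bar q(z|x) = \frac{1}{A}\sum_{a'} q_{\phi_{a'}}(z|x)$ as a single variational proposal and applying Jensen's inequality, equivalently the nonnegativity of $\mathrm{KL}\bigl(\bar q(z|x)\,\|\,p_\theta(z|x)\bigr)$.
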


We leverage the unbiased property of \modelname{S2A} to substantially reduce the complexity involved in estimating the MISELBO objective. The computational cost of this estimator is proportional to $S\times A$, however, given that it is unbiased, we can keep $S$ small and instead increase $A$. 

Although the focus of our work is on uniformly weighted components, we generalize Theorem \ref{stwoaunbiasedcorollary} to hold for arbitrary mixture weights.

\begin{theorem}
\label{w_thm}
    The Some-to-All estimator is an unbiased estimator of MISELBO for arbitrary mixture weights.
\end{theorem}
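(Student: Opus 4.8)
The plan is to mirror the proof of Theorem~\ref{stwoaunbiasedcorollary}, replacing the uniform weight $1/A$ by arbitrary weights $w_1,\dots,w_A$ with $\sum_a w_a = 1$ throughout. The relevant objective is the weighted MISELBO
\begin{equation*}
\mathcal{L}_{\mathrm{MIS}}^{w} = \sum_{a=1}^A w_a\, \mathbb{E}_{q_{\phi_a}(z_a|x)}\!\left[\log \frac{p_\theta(x, z_a)}{\sum_{a'=1}^A w_{a'} q_{\phi_{a'}}(z_a|x)}\right],
\end{equation*}
whose denominator is now the weighted mixture, together with the corresponding estimator, obtained by sampling a size-$S$ subset $\Phi \sim \varphi(\Phi)$ uniformly and drawing $z_s \sim q_{\phi_s}$ for each $\phi_s \in \Phi$,
\begin{equation*}
\widetilde{\mathcal{L}}_{\mathrm{S2A}}^{w} = \frac{A}{S}\sum_{s=1}^S w_{a_s}\log \frac{p_\theta(x, z_s)}{\sum_{a'=1}^A w_{a'} q_{\phi_{a'}}(z_s|x)},
\end{equation*}
where $a_s$ indexes the $s$-th selected component. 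Setting $w_a = 1/A$ recovers exactly the estimator and objective of Theorem~\ref{stwoaunbiasedcorollary}, so the argument below strictly generalizes it. First I would confirm that this is the right weighted estimator to target, i.e.\ that the explicit factor $A/S$ and the per-term weight $w_{a_s}$ are the correct choices for absorbing the non-uniform weighting.

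The core computation proceeds by the law of total expectation, conditioning first on the sampled subset $\Phi$. Abbreviating the per-component expected log-ratio by
\begin{equation*}
g_a := \mathbb{E}_{q_{\phi_a}(z_a|x)}\!\left[\log \frac{p_\theta(x, z_a)}{\sum_{a'=1}^A w_{a'} q_{\phi_{a'}}(z_a|x)}\right],
\end{equation*}
the inner expectation over the latent draws $z_s$ (given $\Phi$) replaces each summand by $g_{a_s}$, since the $z_s$ are independent across $s$ with $z_s \sim q_{\phi_s}$ and $w_{a_s}$ is deterministic given $\Phi$. Hence $\mathbb{E}[\widetilde{\mathcal{L}}_{\mathrm{S2A}}^{w}\mid \Phi] = \frac{A}{S}\sum_{a \in \Phi} w_a g_a$, and taking the outer expectation over $\Phi$ and using linearity gives $\mathbb{E}[\widetilde{\mathcal{L}}_{\mathrm{S2A}}^{w}] = \frac{A}{S}\sum_{a=1}^A w_a g_a\, \Pr[a \in \Phi]$.

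It then remains to compute the marginal inclusion probability. Because $\varphi$ is uniform over the $\binom{A}{S}$ subsets of size $S$, exactly $\binom{A-1}{S-1}$ of them contain a fixed index $a$, so $\Pr[a \in \Phi] = \binom{A-1}{S-1}/\binom{A}{S} = S/A$ for every $a$, independent of the weights. Substituting cancels the $A/S$ factor and yields $\mathbb{E}[\widetilde{\mathcal{L}}_{\mathrm{S2A}}^{w}] = \sum_{a=1}^A w_a g_a = \mathcal{L}_{\mathrm{MIS}}^{w}$, which is the claim. I expect the main obstacle to be conceptual rather than computational: one must resist encoding the weights directly into the subset-sampling distribution (so that ``heavier'' components are drawn more often), because for without-replacement sampling the achievable marginal inclusion probabilities obey $\sum_a \Pr[a\in\Phi]=S$ with $\Pr[a\in\Phi]\le 1$, and so cannot in general be made proportional to arbitrary $w_a$ once some $S w_a > 1$. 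Keeping the subset distribution uniform and instead placing the weights explicitly in the estimator sidesteps this entirely, which is precisely why the symmetry fact $\Pr[a\in\Phi]=S/A$ continues to carry the whole argument.
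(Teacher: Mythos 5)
Your proof is correct and takes essentially the same approach as the paper: both arguments weight each selected component by the inverse of its inclusion probability (your explicit factor $\frac{A}{S}\,w_{a_s}$ coincides with the paper's $u_k = w_k/(w_A\,\mathbb{E}[H_k])$ once the subset distribution is uniform, since then $\mathbb{E}[H_k]=S/A$) and conclude by linearity of expectation over the subset draw. The paper's proof is slightly more general — it permits an arbitrary subset distribution $\varphi$, compensating through $\mathbb{E}[H_k]$ in the estimator weights (which also resolves the concern in your last paragraph: non-uniform, weight-biased subset sampling is fine as long as one divides by the resulting inclusion probabilities), and covers the importance-weighted objective with $L\geq 1$ — but the core computation is identical.
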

\begin{proof}
    See Appendix \ref{app:weighted_proof}.
\end{proof}

\paragraph{\modelname{Some-to-Some}} The next estimator is inspired by the \textit{a priori} partitioning approach in \citet[Section 7.2]{elvira2019generalized}. For a given data point, we, again obtain a $\Phi \sim \varphi(\Phi)$, where $|\Phi| = S$. In contrast to the \modelname{S2A} estimator, we here only evaluate the simulated latents on this subset of components, and we get the \modelname{Some-to-Some} (\modelname{S2S}) estimator

\begin{equation}
    \widetilde{\mathcal{L}}_\text{\modelname{S2S}} := \frac{1}{S}\sum_{s=1}^S \log \frac{p_{\theta}(x|z_s)p_\theta(z_s)}{\frac{1}{S}\sum_{\phi_{s'}\in \Phi} q_{\phi_{s'}}(z_s|x)}, 
\end{equation}
where $z_s \sim q_{\phi_{
s}(z_s|x)}$ for all $\phi_s \in \Phi$. The cost of computing this estimator is $S\times S$. Letting $S=1$ is equivalent to inferring the parameters of an ensemble of variational approximations \cite{kviman2022multiple}. 

The \modelname{S2S} estimator also connects with the R2 scheme in \citet{elvira2019generalized} since in both cases a subset of components is used to simulate the samples and this same subset of components appear in the denominator. Again, the difference is that the R2 scheme samples exactly $S=A$ samples by selecting the components with multinomial resampling with replacement, while our \modelname{S2S} samples $S<A$ samples by selecting the components with multinomial resampling without replacement.

\begin{theorem}\label{thm:s2s_lower_bound}
    \stwoslowerboundmiselbothm
\end{theorem}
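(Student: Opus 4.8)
The plan is to exploit the concavity of the single-proposal ELBO as a functional of its proposal distribution, together with the observation that the uniform mixture over all $A$ components is exactly the average of the random sub-mixtures. First I would condition on the sampled subset $\Phi$ and take the expectation over the latents $z_s\sim q_{\phi_s}$. The crucial feature of the \modelname{S2S} estimator is that every term in the sum shares the \emph{same} denominator $q_\Phi(z|x) := \frac{1}{S}\sum_{\phi_{s'}\in\Phi} q_{\phi_{s'}}(z|x)$, which depends only on $\Phi$ and not on the summation index $s$. Writing $\mathrm{ELBO}(q) := \mathbb{E}_{q(z|x)}\big[\log\tfrac{p_\theta(x,z)}{q(z|x)}\big]$, the average of per-component expectations therefore collapses into a single expectation under the sub-mixture, $\frac{1}{S}\sum_{\phi_s\in\Phi}\mathbb{E}_{q_{\phi_s}}[g(z)] = \mathbb{E}_{q_\Phi}[g(z)]$ with $g(z)=\log\tfrac{p_\theta(x,z)}{q_\Phi(z|x)}$, yielding the conditional identity $\mathbb{E}_{z\mid\Phi}\big[\widetilde{\mathcal{L}}_\text{\modelname{S2S}}\big] = \mathrm{ELBO}(q_\Phi)$. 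By the tower property this gives $\mathbb{E}\big[\widetilde{\mathcal{L}}_\text{\modelname{S2S}}\big] = \mathbb{E}_{\Phi\sim\varphi}\big[\mathrm{ELBO}(q_\Phi)\big]$.

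The second ingredient is to verify that $q_\Phi$ averages to the full mixture. Since each component is included in $\Phi$ with probability $S/A$ under the uniform subset distribution $\varphi(\Phi)$, one obtains $\mathbb{E}_{\Phi\sim\varphi}[q_\Phi(z|x)] = \frac{1}{A}\sum_{a=1}^A q_{\phi_a}(z|x) =: q_A(z|x)$, which is exactly the proposal appearing in the denominator of $\mathcal{L}_\textnormal{MIS}$, so that $\mathrm{ELBO}(q_A)=\mathcal{L}_\textnormal{MIS}$. The final step is to note that $\mathrm{ELBO}(q)$ is concave in $q$: it decomposes as $\mathbb{E}_q[\log p_\theta(x,z)]$, linear in $q$, plus the differential entropy $H(q)=-\int q\log q$, which is concave. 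Because $\Phi$ ranges over the finite collection of $\binom{A}{S}$ subsets, $\mathbb{E}_\Phi[\mathrm{ELBO}(q_\Phi)]$ is a genuine finite convex combination, so Jensen's inequality for the concave functional $\mathrm{ELBO}(\cdot)$ applies without measure-theoretic subtleties and gives $\mathbb{E}_\Phi[\mathrm{ELBO}(q_\Phi)]\leq \mathrm{ELBO}(\mathbb{E}_\Phi[q_\Phi]) = \mathrm{ELBO}(q_A)=\mathcal{L}_\textnormal{MIS}$, which is the claim.

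I expect the main obstacle to be the first step rather than the concavity argument: one must check carefully that the per-component expectations can be merged into a single expectation under $q_\Phi$, and this hinges precisely on the denominator being constant across the sum. This is exactly what distinguishes \modelname{S2S} from \modelname{S2A}, whose denominator is the full mixture $q_A$ regardless of $\Phi$ and which is therefore \emph{unbiased} (Theorem \ref{stwoaunbiasedcorollary}) rather than a strict lower bound. I would also remark that the same concavity-plus-Jensen template yields the monotonicity of MISELBO in the number of components, so the result can be read as the statement that restricting the denominator to a sampled sub-mixture can only loosen the bound.
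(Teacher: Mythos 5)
Your proof is correct, but it takes a genuinely different route from the paper's. The paper proves this theorem by comparing \modelname{S2S} directly against the \modelname{S2A} estimator: using Theorem \ref{stwoaunbiasedcorollary} (unbiasedness of \modelname{S2A}), it subtracts the two expectations term by term, observes that the per-component logarithm differences collapse (by the same merging argument you use) into $\mathbb{E}_{\varphi(\Phi)}\left[\mathrm{KL}\left(\frac{1}{S}\sum_{\phi_k\in\Phi}q_{\phi_k}(z|x)\,\Big\Vert\,\frac{1}{A}\sum_{j=1}^{A}q_{\phi_j}(z|x)\right)\right]$, and concludes from non-negativity of the KL divergence. You instead bypass \modelname{S2A} entirely: you identify the conditional expectation of $\widetilde{\mathcal{L}}_\text{\modelname{S2S}}$ as the ELBO of the sub-mixture $q_\Phi$, check that $\mathbb{E}_\Phi[q_\Phi]$ equals the full uniform mixture, and invoke concavity of the ELBO functional (linear energy term plus concave entropy) with Jensen's inequality. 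The two arguments are mathematically equivalent --- the Jensen gap for the ELBO functional at a mixture of proposals is exactly the expected KL the paper exhibits, since $\mathrm{ELBO}(q_A)-\mathbb{E}_\Phi[\mathrm{ELBO}(q_\Phi)]=\mathbb{E}_\Phi\left[\int q_\Phi\log\frac{q_\Phi}{q_A}\right]$ when $q_A=\mathbb{E}_\Phi[q_\Phi]$ --- but they buy different things. The paper's computation gives an explicit, interpretable expression for the slack (how far \modelname{S2S} sits below MISELBO is precisely the expected divergence between the sampled sub-mixture and the full mixture), whereas your argument is self-contained (no appeal to Theorem \ref{stwoaunbiasedcorollary}), generalizes immediately to any subset-sampling distribution whose mean recovers the full mixture (not just uniform subsets without replacement), and, as you note, the same template recovers monotonicity-in-$A$ statements for mixture ELBOs.
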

\begin{proof}
    See the supplementary material.
\end{proof}

\begin{corollary}
    \stwoslowerboundmarglikcorollary
\end{corollary}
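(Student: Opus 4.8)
The plan is to prove this corollary by transitivity, chaining the preceding Theorem~\ref{thm:s2s_lower_bound} with the standard fact that MISELBO is itself a lower bound on the marginal log-likelihood. Since Theorem~\ref{thm:s2s_lower_bound} already supplies $\mathbb{E}[\widetilde{\mathcal{L}}_\text{\modelname{S2S}}] \leq \mathcal{L}_\textnormal{MIS}$, the only remaining ingredient is the inequality $\mathcal{L}_\textnormal{MIS} \leq \log p_\theta(x)$, after which the claim follows immediately.

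First I would recognize the uniformly weighted mixture $\bar{q}_\phi(z|x) := \frac{1}{A}\sum_{a=1}^A q_{\phi_a}(z|x)$ as a single valid proposal distribution. Because integrating any integrand against this mixture equals the average of the integrals against its components, the outer average-of-expectations in Eq.~\eqref{eq:miselbo} collapses into one expectation under $\bar{q}_\phi$; that is, $\frac{1}{A}\sum_{a=1}^A \mathbb{E}_{q_{\phi_a}(z|x)}[f(z)] = \mathbb{E}_{\bar{q}_\phi(z|x)}[f(z)]$ for any integrable $f$. Applying this with $f(z) = \log\frac{p_\theta(x,z)}{\bar{q}_\phi(z|x)}$ rewrites MISELBO exactly as the single-proposal ELBO, $\mathcal{L}_\textnormal{MIS} = \mathbb{E}_{\bar{q}_\phi(z|x)}\left[\log\frac{p_\theta(x,z)}{\bar{q}_\phi(z|x)}\right]$.

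Second I would apply Jensen's inequality, equivalently the non-negativity of the KL divergence, to this rewritten form to obtain $\mathcal{L}_\textnormal{MIS} = \log p_\theta(x) - \mathrm{KL}\big(\bar{q}_\phi(z|x)\,\|\,p_\theta(z|x)\big) \leq \log p_\theta(x)$. Combining this with Theorem~\ref{thm:s2s_lower_bound} gives the chain $\mathbb{E}[\widetilde{\mathcal{L}}_\text{\modelname{S2S}}] \leq \mathcal{L}_\textnormal{MIS} \leq \log p_\theta(x)$, which is precisely the stated bound.

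I do not expect a genuine obstacle, as this is a direct corollary of the preceding theorem. The one step that warrants care is the collapse of the average of component expectations into a single mixture expectation; this relies only on linearity of integration together with the uniform weighting, and it is exactly the observation (noted in the Background) that MISELBO is the ELBO associated with the mixture proposal $\bar{q}_\phi$, hence a valid evidence lower bound. With that identity in hand, the transitivity argument completes the proof.
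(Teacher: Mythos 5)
Your proof is correct and follows essentially the same route as the paper's: it chains Theorem~\ref{thm:s2s_lower_bound} with the inequality $\mathcal{L}_\textnormal{MIS} \leq \log p_\theta(x)$ by transitivity. The only difference is that you prove the latter inequality inline (via the mixture-ELBO identity and non-negativity of the KL divergence), whereas the paper simply cites it as a known result from \citet{kviman2022multiple}.
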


The \modelname{S2S} estimator has the smallest computational cost among the three presented here. However, what it gains in speed it trades off in joint inference among the mixture components---a component that is not in $\Phi$ will not affect the inference of $\phi_s\in\Phi$, which should affect cooperation. Yet, given a fixed \( S \), we can improve performance by increasing \( A \) without additional computational burden. This is because the S2S estimator can be viewed as an ensemble of mixtures, where we have access to \( A \) models and select \( S \) components for each instance of the ensemble.

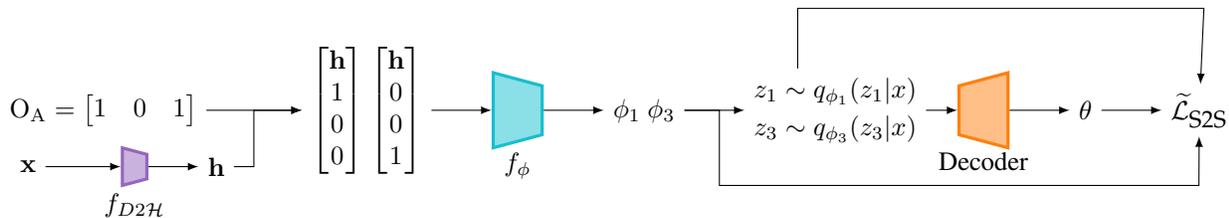
\begin{figure*}[th]
    \centering
\begin{tikzpicture}
    \node[yshift=0.25cm, xshift = -6cm] (onehot) {$\mathrm{O}_\mathrm{A} = \begin{bmatrix} 1 & 0 & 1 \end{bmatrix}$};
    \node[yshift=-0.5cm, xshift = -7cm] (data) {$\bx$};
    \node[yshift=-0.5cm, xshift = -4.5cm] (hidden) {$\bh$};
    \node[yshift=0.25cm, xshift = -2.5cm] (twoonehots) {$\begin{bmatrix} \bh\\1\\ 0 \\ 0 \end{bmatrix}$ $\begin{bmatrix} \bh\\0\\ 0 \\ 1 \end{bmatrix}$};
    
    \coordinate (input); 
    \node[draw,very thick,C4, fill=C4!50, trapezium, trapezium angle=75, shape border rotate=270, minimum
    width=0.5cm, minimum height=0.25cm, right=1cm of data, label=below:$f_{D2\mathcal{H}}$] (fd2h) {};
    \node[draw,very thick,C9, fill=C9!50, trapezium, trapezium angle=75, shape border rotate=270, minimum
        width=1cm, minimum height=0.5cm, right=0.8cm of twoonehots, label=below:$f_\phi$] (fphi) {};
        
    \node[right=0.8cm of fphi, yshift = 0cm] (phis) {$\phi_1$ $\phi_3$};

    \node[right=0.8cm of phis, yshift = -0.25cm] (qphi1) {$z_3 \sim q_{\phi_3}(z_3 \vert x)$};
    \node[right=0.8cm of phis, yshift =0.25cm] (qphi3) {$z_1 \sim q_{\phi_1}(z_1 \vert x)$};

     \node[draw,very thick,C1, fill=C1!50, trapezium, trapezium angle=75, shape border rotate=90, minimum
        width=1cm, minimum height=0.5cm, right=5.5cm of fphi, label=below:Decoder] (decoder) {};
    \node[right=0.8cm of decoder, yshift =0cm] (theta) {$\theta$};
        
    \node[right=0.8cm of theta, yshift =0cm] (elbo) {$\widetilde{\mathcal{L}}_\text{\modelname{S2S}}$};
    
    \draw[-latex] (data) to ([yshift=0.0cm]fd2h.west);
    \draw[-latex] (fd2h) to ([yshift=0.0cm]hidden.west);

    \draw[-latex] (hidden) -- ++(0.5,0) |- (twoonehots);
      \draw[-latex] (phis) -- ++(1,0) -- ++(0,-1)-- ++(6.375,0) -- (elbo);
      
        \draw[-latex] ([xshift=-0.5cm]qphi3.north) -- ++(0,0.4) -- ++(0,0.4) -- ++(5.4,0) -- (elbo);

    \draw[-latex] (theta) to (elbo);
    \draw[-latex] (onehot) to (twoonehots);
    \draw[-latex] (twoonehots) to (fphi);
    \draw[-latex] (fphi) to (phis);
    \draw[-latex] (phis) to ([yshift=0.25cm]qphi1.west);
    \draw[-latex] ([yshift=0.25cm]qphi1.east) to (decoder);
    \draw[-latex] (decoder) to (theta);

\end{tikzpicture}
\caption{Block diagram depicting the estimation of MISELBO using \textbf{\modelname{MISVAE}} with the \textbf{$\mathrm{S2S}$} estimator, with $S=2$ and $A=3$.  First, \(f_{D2\mathcal{H}}\) maps the data to an intermediate hidden space, producing a representation \(h\). The next network, \(f_\phi\), takes \(h\) along with \(S\) \(A\)-dimensional one-hot encodings, acting as signals of the \(S\) mixtures used by the \(\text{S2S}\) estimator, as input, which are then mapped to the variational parameters, here \(\phi_1\) and \(\phi_2\), of the mixture components. Samples drawn from the \(S\) mixtures are then passed to a decoding network to produce the parameters \(\theta\) of the generative model. Collectively, the sampled latent variables, the variational parameters, and \(\theta\) , are used to compute \(\widetilde{\mathcal{L}}_{\text{S2S}}\). The diagram is explained in detail in Sec. \ref{sec:misvae}. Corresponding diagrams for the $\mathrm{S2A}$ and $\mathrm{A2A}$ estimators can be found in Fig. \ref{fig:misvae-architecture-s2a-a2a}.}
\label{fig:misvae-architecture}
\end{figure*}

\paragraph{Summary}
Our two new estimators have lower computational complexity than the \modelname{A2A} estimator. As we will demonstrate, the benefits gained in practice in terms of runtime will be particularly important if the numerator (the generative model) of Eq. \eqref{eq:miselbo} is expensive to compute. Comparing \modelname{S2A} and \modelname{S2S}, the latter will enjoy a shorter inference time if the entropy, or, alternatively, the denominator in Eq. \eqref{eq:miselbo}, is expensive to compute.

\section{Multiple Importance Sampling VAE}
\label{sec:misvae}
Impressive results have been obtained by naively expanding the parameter space with the number of mixtures. However, by carefully studying the problem at hand, similar, or even improved, performance gains can be achieved at negligible increases in parameter costs.

\modelname{MISVAE} is a new Mixture VAE architecture featuring an encoder network composed of two consecutive networks. The novelty of \modelname{MISVAE} lies in the second network, which parameterizes the mixture components using amortization.

The first network maps the data to an intermediate (deterministic) hidden space, $\mathcal{H}$; we refer to it as the D2$\mathcal{H}$ net and denote the function as $f_{D2\mathcal{H}}$. The second net is a mapping from the Cartesian product of $\mathcal{H}$ and the space of $A$-dimensional one-hot encodings, $\mathcal{O}_A$, to the parameters of a mixture component. We denote this as $f_\phi$ and call it the amortized mixture parameterization (AMP) net. We write\begin{equation}
    f_{D2\mathcal{H}}: \mathcal{X}\mapsto\mathcal{H},~~f_{\phi}: \mathcal{H}\times \mathcal{O}_A \mapsto \mathcal{Q},
\end{equation}
or, alternatively, $h = f_{D2\mathcal{H}}(x)$ and if $\mathcal{Q}$ is the family of Gaussians, $(\mu(h, {\phi_s}), \sigma(h, {\phi_s}))=f_ \phi(h, o_A(s))$, where $o_A(s)$ is an $A$ long one-hot encoding with the $s$-th element set to one. 

The AMP net, $f_\phi$, is a sequence of neural networks (NNs), shared among all mixture components. The NNs take $o_A(s)$ as their biases. That is, to get the parameters of the $s$-th component, we pass $o_A(s)$ as a bias to the NNs. See Fig. \ref{fig:misvae-architecture}  for a depiction of the \modelname{MISVAE} architecture.

\section{Experiments}
\label{sec:experiments}
In this section, we infer variational parameters using \modelname{MISVAE} along with the \modelname{S2S}, \modelname{S2A}, and \modelname{A2A} estimators. We conduct comparisons among these methods and against SOTA approaches across a synthetic dataset, three image datasets, and eight phylogenetic datasets. All code necessary to replicate our experiments is publicly available at: \url{https://github.com/okviman/efficient-mixtures}.

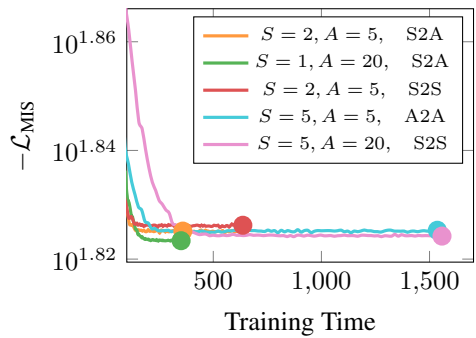
\begin{figure}[!htbp]
    \centering 
    \centering
    \begin{tikzpicture}
\begin{axis}[
    xlabel={Training Time},
    ylabel={$-\mathcal{L}_\textnormal{MIS}$},
    legend pos=north east,
    legend style={font=\scriptsize},
    ymode = log,
    xmin=100,
    width = 0.75\linewidth, height = 0.6\linewidth
]

\addplot[smooth,C1!80, very thick, each nth point=5,] table [col sep=comma, x=Time, y expr=-\thisrow{ELBO} ] {data_S2_A5_s2a.csv};
\addlegendentry{$S=2, A=5, \quad\mathrm{S2A}$}
\addplot[fill=C1!80, very thick, forget plot, mark size=3pt, mark=*, mark options={fill=C1!80}, C1!80] coordinates {(359.2495627403239, 66.923676)};

\addplot[smooth, C2!80, very thick, each nth point=5,] table [col sep=comma, x=Time, y expr=-\thisrow{ELBO}] {data_S1_A20_s2a.csv};
\addlegendentry{$S=1, A=20,\quad \mathrm{S2A}$}
\addplot[fill=C2!80, very thick, mark=*,mark size=3pt, forget plot, mark options={fill=C2!80}, C2!80] coordinates {(351.202199697494, 66.6476)};

\addplot[smooth,C3!80, very thick, each nth point=5,] table [col sep=comma, x=Time, y expr=-\thisrow{ELBO}] {data_S2_A5_s2s.csv};
\addlegendentry{$S=2, A=5,\quad \mathrm{S2S}$}
\addplot[fill=C3!80, very thick, mark=*,mark size=3pt, forget plot, mark options={fill=C3!80}, C3!80] coordinates {(636.7559792995494, 67.08322)};

\addplot[smooth, C9!80, very thick, each nth point=5,] table [col sep=comma, x=Time, y expr=-\thisrow{ELBO}] {data_S5_A5_a2a.csv};
\addlegendentry{$S=5, A=5, \quad\mathrm{A2A}$}
\addplot[fill=C9!80, very thick, mark=*,mark size=3pt,  forget plot,mark options={fill=C9!80}, C9!80] coordinates {(1536.1006073951728, 66.94394)};

\addplot[smooth, C6!80, very thick, each nth point=5,] table [col sep=comma, x=Time, y expr=-\thisrow{ELBO}] {data_S5_A20_s2s.csv};
\addlegendentry{$S=5, A=20, \quad\mathrm{S2S}$}
\addplot[fill=C6!80, very thick, mark=*,mark size=3pt, forget plot, mark options={fill=C6!80}, C6!80] coordinates {(1557.6487603187738, 66.78039)};
\end{axis}
\end{tikzpicture}

    
    \caption{Comparison of MISELBO approximation performance and training runtimes across three distinct estimators under various settings of \(S\) and \(A\) in the Toy Experiment, trained for $50,000$ epochs.}
    \label{fig:ar_toy_posterior_curves}
    \vspace{-5pt}
\end{figure}

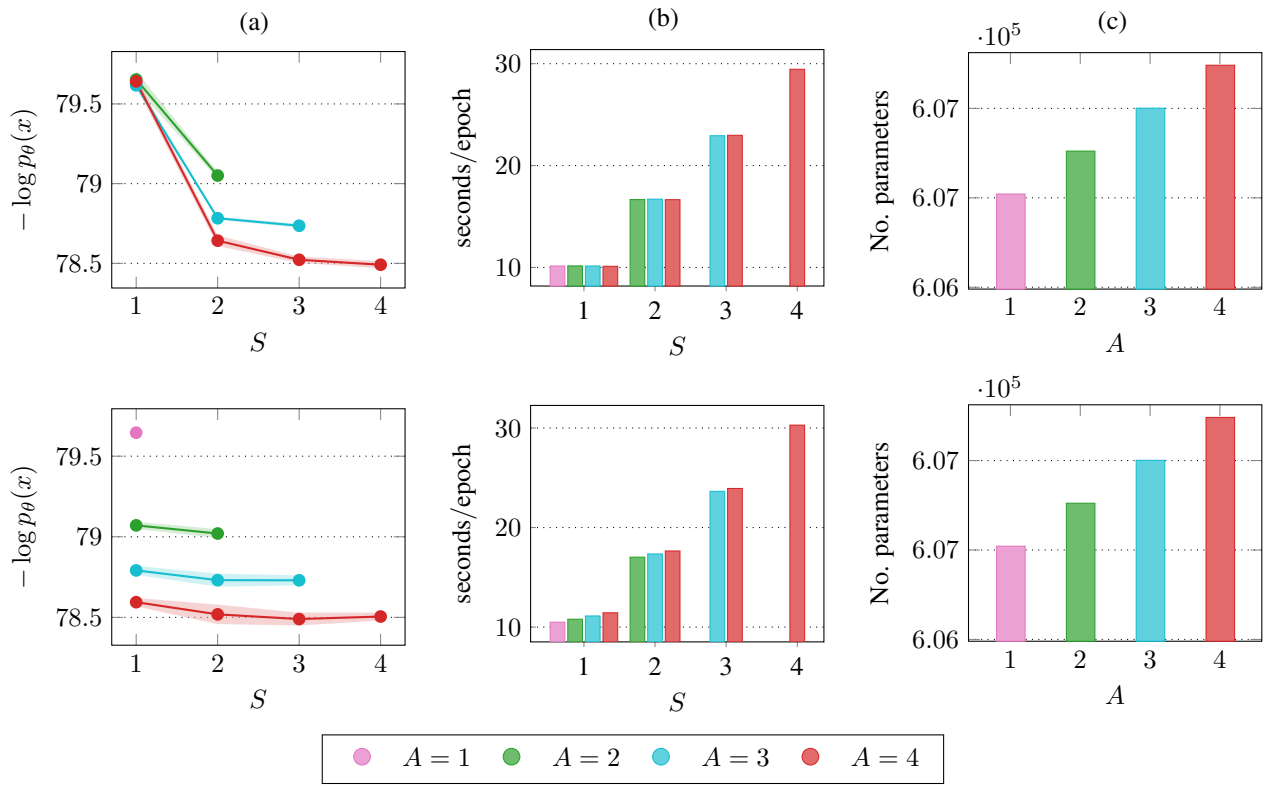
\begin{figure*}[ht]
\centering
\setlength{\abovecaptionskip}{10pt}
\begin{minipage}{0.32\textwidth}
\centering
  \hspace{35pt} (a)\\
  \vspace{5pt}
  \centering
  \begin{tikzpicture}
    \begin{axis}[
        xlabel={$S$},
         ylabel={$  -\log p_{\theta}(x)$},
        legend pos=north east,
        major grid style={dotted,black},
        ymajorgrids=true, 
        width=0.999\linewidth,
        xtick={1,2,3,4},
        legend style={at={(0.5,-0.3)},
        anchor=north,legend columns=2, font=\small},
    ]
    \addplot[color=\Sone, mark=*, mark options={solid}, thick, name path=mean1]
    coordinates {
      (1,79.62780311)
    };
    \addplot[name path=upper1, draw=none] coordinates {
      (1,79.62780311 + 0.02541483835)
    };
    \addplot[name path=lower1, draw=none] coordinates {
      (1,79.62780311 - 0.02541483835)
    };
    \addplot[\Sone!40, fill opacity=0.5] fill between[of=upper1 and lower1];
    \addplot[color=\Stwo, mark=*, mark options={solid}, thick, name path=mean2]
    coordinates {
        (1,79.65321958)
        (2,79.05084711)
    };
    \addplot[name path=upper2, draw=none] coordinates {
      (1,79.65321958 + 0.05165358563)
      (2,79.05084711 + 0.02124002654)
    };
    \addplot[name path=lower2, draw=none] coordinates {
      (1,79.65321958 - 0.05165358563)
      (2,79.05084711 - 0.02124002654)
    };
    \addplot[\Stwo!40, fill opacity=0.5] fill between[of=upper2 and lower2];
    \addplot[color=\Sthree, mark=*, mark options={solid}, thick, name path=mean3]
    coordinates {
        (1,79.61649998)
        (2, 78.78297256)
        (3,78.73591976)
    };
    \addplot[name path=upper3, draw=none] coordinates {
      (1,79.61649998 + 0.01939768964)
      (2,78.78297256 + 0.00966533411)
      (3,78.73591976 + 0.01243831165)
    };
    \addplot[name path=lower3, draw=none] coordinates {
      (1,79.61649998 - 0.01939768964)
      (2,78.78297256 - 0.00966533411)
      (3,78.73591976 - 0.01243831165)
    };
    \addplot[\Sthree!40, fill opacity=0.5] fill between[of=upper3 and lower3];
    \addplot[color=\Sfour, mark=*, mark options={solid}, thick, name path=mean4]
    coordinates {
        (1,79.64138781)
        (2,78.64244235)
        (3,78.52243277)
        (4,78.49139702)
    };
    \addplot[name path=upper4, draw=none] coordinates {
        (1,79.64138781 + 0.0329175222)
        (2,78.64244235 + 0.03340940546)
        (3,78.52243277 + 0.02004899853)
        (4,78.49139702 + 0.02236409292)
    };
    \addplot[name path=lower4, draw=none] coordinates {
        (1,79.64138781 - 0.0329175222)
        (2,78.64244235 - 0.03340940546)
        (3,78.52243277 - 0.02004899853)
        (4,78.49139702 - 0.02236409292)
    };
    \addplot[\Sfour!40, fill opacity=0.5] fill between[of=upper4 and lower4];

    \end{axis}
  \end{tikzpicture}
\end{minipage}
\hspace{0.0025\textwidth}
\begin{minipage}{0.32\textwidth}
  \centering
  \hspace{20pt} (b)\\
  \vspace{5pt}

\begin{tikzpicture}
\begin{axis}[
    xmin=0.25,
    xmax=4.375,
    ybar,
    width=0.999\linewidth,
    ylabel={$\text{seconds}/\text{epoch}$},
    bar width=0.925\mybarwidth,
    xtick={1,2,3,4},
    xticklabels={1,2,3,4},
    major grid style={dotted,black},
        legend style={at={(0.5,-0.3)},
        anchor=north,legend columns=2, font=\small},
    ymajorgrids=true,
    xlabel={$S$},
        tick style={
        tick align=inside,
        tick pos=left,
        major tick length=2pt
    },
    /pgf/bar shift auto/.style={
        /pgf/bar width/.initial=1pt,
    },
]
\addplot[fill=\Sone!69, draw=\Sone, bar shift=-1.8\mybarwidth*0.925] coordinates {(1,10.14)};
\addplot[fill=\Stwo!69, draw=\Stwo, bar shift=-0.6\mybarwidth*0.925] coordinates {(1,10.15)};
\addplot[fill=\Sthree!69, draw=\Sthree, bar shift=0.6\mybarwidth*0.925] coordinates {(1,10.14) };
\addplot[fill=\Sfour!69, draw=\Sfour, bar shift=1.8\mybarwidth*0.925] coordinates {(1,10.11)};

\addplot[fill=\Stwo!69, draw=\Stwo, bar shift=-1.2\mybarwidth*0.925] coordinates {(2,16.66)};
\addplot[fill=\Sthree!69, draw=\Sthree, bar shift=0\mybarwidth*0.925] coordinates { (2,16.7)};
\addplot[fill=\Sfour!69, draw=\Sfour, bar shift=1.2\mybarwidth*0.925] coordinates {(2,16.65)};

\addplot[fill=\Sthree!69, draw=\Sthree, bar shift=-0.6\mybarwidth*0.925] coordinates {(3,22.93)};
\addplot[fill=\Sfour!69, draw=\Sfour, bar shift=0.6\mybarwidth*0.925] coordinates { (3,22.97)};

\addplot[fill=\Sfour!69, draw=\Sfour, bar shift=0\mybarwidth*0.925] coordinates {(4,29.45)};

\end{axis}
\end{tikzpicture}

\end{minipage}
\hspace{0.0025\textwidth}
\begin{minipage}{0.32\textwidth}
\centering
  \hspace{35pt} (c)\\
  \vspace{-5pt}
  \centering

\begin{tikzpicture}
\begin{axis}[
    ybar ,
    enlargelimits=0.025,
    width=0.999\linewidth,
    ylabel={$ \text{No. parameters}$},
     bar width=1.8\mybarwidth,
    xtick={1,2,3,4},
    legend style={at={(0.5,-0.4)},
      anchor=north,legend columns=2, font=\small, opacity=0},
    ymajorgrids=true,
    xlabel={$A$},
    major grid style={dotted,black},
    xmax = 4.5,
    xmin=0.5,
    ymax = 607780,
    ymin= 606521,
    xtick align=inside,
    bar shift=0.025\mybarwidth
    ]
    
\addplot[fill=\Sone!69, draw=\Sone] coordinates {(1, 607021)};
\addplot[fill=\Stwo!69, draw=\Stwo] coordinates{(2, 607261)};
\addplot[fill=\Sthree!69, draw=\Sthree] coordinates {(3, 607501) };
\addplot[fill=\Sfour!69, draw=\Sfour] coordinates {(4, 607741)};
\end{axis}
\end{tikzpicture}
\end{minipage}
\begin{minipage}{0.32\textwidth}
\vspace{-20pt}
  \centering
  \begin{tikzpicture}
    \begin{axis}[
        xlabel={$S$},
         ylabel={$  -\log p_{\theta}(x)$},
        legend pos=north east,
        major grid style={dotted,black},
        ymajorgrids=true,
        width=0.999\linewidth,
        xtick={1,2,3,4},
        legend style={at={(0.5,-0.3)},
        anchor=north,legend columns=2, font=\small},
    ]
    \addplot[color=\Sone, mark=*, mark options={solid}, thick, name path=mean1]
    coordinates {
      (1,79.64566655)
    };
    \addplot[name path=upper1, draw=none] coordinates {
      (1,79.64566655 + 0.02634546997)
    };
    \addplot[name path=lower1, draw=none] coordinates {
      (1,79.64566655 - 0.02634546997)
    };
    \addplot[\Sone!40, fill opacity=0.5] fill between[of=upper1 and lower1];

    \addplot[color=\Stwo, mark=*, mark options={solid}, thick, name path=mean2]
    coordinates {
      (1,79.07078828)
      (2,79.02022165)
    };
    \addplot[name path=upper2, draw=none] coordinates {
      (1,79.07078828 + 0.02177742407)
      (2,79.02022165 + 0.02743518316)
    };
    \addplot[name path=lower2, draw=none] coordinates {
      (1,79.07078828 - 0.02177742407)
      (2,79.02022165 - 0.02743518316)
    };
    \addplot[\Stwo!40, fill opacity=0.5] fill between[of=upper2 and lower2];
    \addplot[color=\Sthree, mark=*, mark options={solid}, thick, name path=mean3]
    coordinates {
      (1,78.79146045)
      (2,78.73095773)
      (3,78.72992813)
    };
    \addplot[name path=upper3, draw=none] coordinates {
      (1,78.79146045 + 0.02775104247)
      (2,78.73095773 + 0.0406363633)
      (3,78.72992813 + 0.031)
    };
    \addplot[name path=lower3, draw=none] coordinates {
      (1,78.79146045 - 0.02775104247)
      (2,78.73095773 - 0.0406363633)
      (3,78.72992813 - 0.03136226658)
    };
    \addplot[\Sthree!40, fill opacity=0.5] fill between[of=upper3 and lower3];

    \addplot[color=\Sfour, mark=*, mark options={solid}, thick, name path=mean4]
    coordinates {
      (1,78.59430804)
      (2,78.51864284)
      (3,78.48964093)
      (4,78.50517191)
    };
    \addplot[name path=upper4, draw=none] coordinates {
      (1,78.59430804 + 0.02647001335)
      (2,78.51864284 + 0.06012384635)
      (3,78.48964093 + 0.04005568917)
      (4,78.50517191 + 0.02455043531)
    };
    \addplot[name path=lower4, draw=none] coordinates {
      (1,78.59430804 - 0.02647001335)
      (2,78.51864284 - 0.06012384635)
      (3,78.48964093 - 0.04005568917)
      (4,78.50517191 - 0.02455043531)
    };
    \addplot[\Sfour!40, fill opacity=0.5] fill between[of=upper4 and lower4];

    \end{axis}
  \end{tikzpicture}
\end{minipage}
\hspace{0.0025\textwidth}
\begin{minipage}{0.32\textwidth}
\vspace{-20pt}
  \centering

\begin{tikzpicture}
\begin{axis}[
    xmin=0.25,
    xmax=4.375,
    ybar,
    width=0.999\linewidth,
    ylabel={$\text{seconds}/\text{epoch}$},
    bar width=0.925\mybarwidth, 
    xtick={1,2,3,4},
    xticklabels={1,2,3,4},
    major grid style={dotted,black},
        legend style={at={(0.5,-0.3)},
        anchor=north,legend columns=2, font=\small},
    ymajorgrids=true,
    xlabel={$S$},
        tick style={
        tick align=inside,
        tick pos=left,
        major tick length=2pt
    },
    /pgf/bar shift auto/.style={
        /pgf/bar width/.initial=1pt,
    },
]
\addplot[fill=\Sone!69, draw=\Sone, bar shift=-1.8\mybarwidth*0.925] coordinates {(1,10.4931433)};
\addplot[fill=\Stwo!69, draw=\Stwo, bar shift=-0.6\mybarwidth*0.925] coordinates {(1,10.79079)};
\addplot[fill=\Sthree!69, draw=\Sthree, bar shift=0.6\mybarwidth*0.925] coordinates {(1,11.11618) };
\addplot[fill=\Sfour!69, draw=\Sfour, bar shift=1.8\mybarwidth*0.925] coordinates {(1,11.43636)};
\addplot[fill=\Stwo!69, draw=\Stwo, bar shift=-1.2\mybarwidth*0.925] coordinates {(2,17.025889)};
\addplot[fill=\Sthree!69, draw=\Sthree, bar shift=0\mybarwidth*0.925] coordinates { (2,17.3465)};
\addplot[fill=\Sfour!69, draw=\Sfour, bar shift=1.2\mybarwidth*0.925] coordinates {(2,17.646281)};
\addplot[fill=\Sthree!69, draw=\Sthree, bar shift=-0.6\mybarwidth*0.925] coordinates {(3,23.640036)};
\addplot[fill=\Sfour!69, draw=\Sfour, bar shift=0.6\mybarwidth*0.925] coordinates { (3,23.928404)};
\addplot[fill=\Sfour!69, draw=\Sfour, bar shift=0\mybarwidth*0.925] coordinates {(4,30.28939)};
\end{axis}
\end{tikzpicture}

\end{minipage}
\hspace{0.0025\textwidth} 
\begin{minipage}{0.32\textwidth}
\vspace{-20pt}
  \centering
\vspace{0.15\textwidth}
\begin{tikzpicture}
\begin{axis}[
    ybar ,
    enlargelimits=0.025,
    width=0.999\linewidth,
    ylabel={$ \text{No. parameters}$},
     bar width=1.8\mybarwidth,
    xtick={1,2,3,4},
    legend style={at={(0.5,-0.4)},
      anchor=north,legend columns=2, font=\small, opacity=0},
    ymajorgrids=true,
    xlabel={$A$},
    major grid style={dotted,black},
    xmax = 4.5,
    xmin=0.5,
    ymax = 607780,
    ymin= 606521,
    xtick align=inside, 
    bar shift=0.025\mybarwidth
    ]
\addplot[fill=\Sone!69, draw=\Sone] coordinates {(1, 607021)};
\addplot[fill=\Stwo!69, draw=\Stwo] coordinates{(2, 607261)};
\addplot[fill=\Sthree!69, draw=\Sthree] coordinates {(3, 607501) };
\addplot[fill=\Sfour!69, draw=\Sfour] coordinates {(4, 607741)};

 \legend{$S=1$,$S=2$, $S=3$,$S=4$}
\end{axis}
\end{tikzpicture}
\end{minipage}
\vspace{-30pt}
\fbox{
\begin{tabular}{cccccccc}
\tikz\draw[\Sone,fill=\Sone!69] (0,0) circle (.7ex); & $A=1$ &
\tikz\draw[\Stwo,fill=\Stwo!69] (0,0) circle (.7ex); & $A=2$ &
\tikz\draw[\Sthree,fill=\Sthree!69] (0,0) circle (.7ex); & $A=3$ &
\tikz\draw[\Sfour,fill=\Sfour!69] (0,0) circle (.7ex); & $A=4$ \\
\end{tabular}}
\caption{ Results on \textbf{MNIST} for \modelname{MISVAE}  trained with various combinations of $S$ and $A$, with the $\mathrm{S2S}$ estimator (top row) and the $\mathrm{S2A}$ estimator (bottom row). (a) Average (solid) NLL results computed over three runs with one standard deviation (opaque) displayed, (b) training time per epoch, and (c) the number of network parameters for \modelname{MISVAE} for increasing values of $A$. Using \(\text{MISVAE}\), the number of network parameters increases by a small amount as we increase \(A\). Also, with the \(\text{S2S}\) estimator, we can keep \(S\) fixed and increase \(A\), without impacting the number of seconds needed to complete an epoch and simultaneously improving the NLL. For \(\text{S2A}\), we converge to an equivalent solution with \(A\) held fixed for any \(S < A\), meaning that in practice, we can scale up \(A\) for small values of \(S\) at a small extra computational cost per mixture component.}
\label{fig:mnist_small_S_and_A_s2s_vs_sva}
\end{figure*}

\subsection{Toy Example}
Here we construct a toy example in order to evaluate the performances of the estimators when the posterior exhibits certain properties. Concretely, we design a generative model that is non-Gaussian, has an autoregressive likelihood function, and assume that the $A$ terms in the energy term, $\frac{1}{A}\sum_{a=1}^A\mathbb{E}_{q_{\phi_a}(z_a|x)}[\log p_\theta(z_a, x)]$, have to be computed sequentially.

These criteria are of interest as they naturally arise in many settings, including all our real-data experiments below: the posteriors are non-Gaussian, both the pixel-CNN decoder and the likelihood function in Bayesian phylogenetics are autoregressive, and the corresponding energy terms are not always parallelizable–––the CIFAR-10 images are too big to parallelize over $A$ with a reasonable batch size, and the standard implementation of the dynamic programming required to compute the phylogenetic likelihood function does not necessarily parallelize.

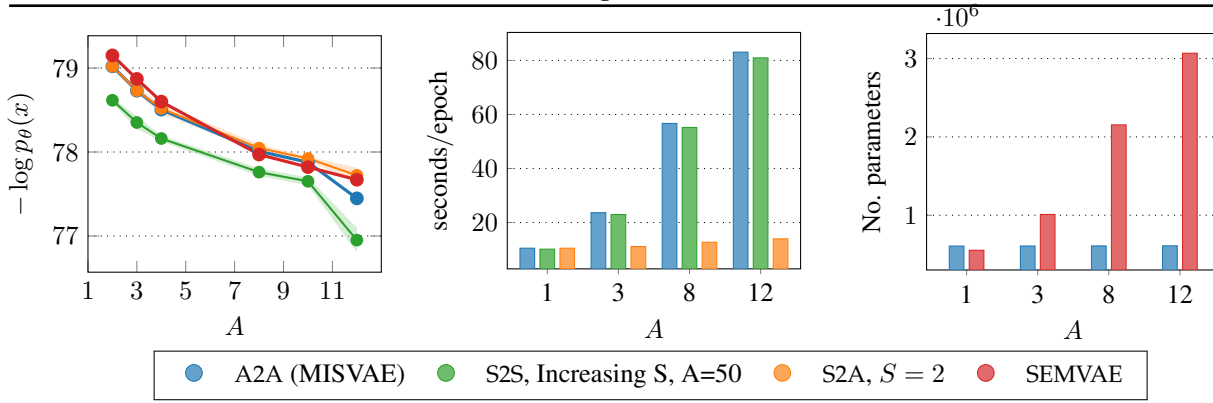
\begin{figure*}[t]

\centering
\begin{minipage}{0.32\textwidth}
\centering
  \begin{tikzpicture}[trim axis left, trim axis right]
    \begin{axis}[
        xlabel={$A$},
         ylabel={$  -\log p_{\theta}(x)$},
        legend pos=north east,
        major grid style={dotted,black},
        ymajorgrids=true,
        width=0.999\linewidth,
        xtick={1,3,5,7, 9, 11},
        legend style={at={(0.5,-0.3)},
        anchor=north,legend columns=2, font=\small},
    ]
  \addplot[
        color=\Sfifty,
        mark=*,
        mark options={solid},
        very thick
        ]
        coordinates {
    (2,79.02022165)
    (3,78.72992813)
    (4,78.50517191)
    (8, 78.00826563)
    (10, 77.87825156)
    (12, 77.44778281)
        };

    \addplot[color=\Stwo, mark=*, mark options={solid}, thick, name path=s2sbigA]
    coordinates {
        (2,78.61600947)
        (3, 78.35156455)
        (4,78.1608179)
	(8,77.76003351)
         (10, 77.65113906)
	(12,76.95066702)
    };
    \addplot[name path=uppers2sbigA, draw=none] coordinates {
        (2,78.61600947 +0.03142932523)
        (3, 78.35156455+0.07533286365)
        (4,78.1608179+0.03542069777)
	(8,77.76003351+0.04479344381)
         (10, 77.65113906+0.048)
	(12,76.95066702+0.1474571023)
    };
    \addplot[name path=lowers2sbigA, draw=none] coordinates {
        (2,78.61600947 -0.03142932523)
        (3, 78.35156455-0.07533286365)
        (4,78.1608179-0.03542069777)
	(8,77.76003351-0.04479344381)
         (10, 77.65113906-0.048)
	(12,76.95066702-0.1474571023)
    };
    \addplot[\Stwo!40, fill opacity=0.5] fill between[of=uppers2sbigA and lowers2sbigA];
    \addplot[color=C1, mark=*, mark options={solid}, thick, name path=misvaes2an2]
    coordinates {
        (2,79.02022165)
        (3, 78.73095773)
        (4,78.51864284)
	(8,78.04645052)
         (10, 77.92294207)
	(12,77.72173854)
    };
    \addplot[name path=uppermisvaes2an2, draw=none] coordinates {
      (2,79.02022165+ 0.02743518316)
	(3, 78.73095773+0.0406363633)
	(4,78.51864284+ 0.06012384635)
	(8,78.04645052+0.04556460325)
         (10, 77.92294207+ 0.02245775397)
	(12,77.72173854+0.09217452346)
    };
    \addplot[name path=lowermisvaes2an2, draw=none] coordinates {
      (2,79.02022165- 0.02743518316)
	(3, 78.73095773-0.0406363633)
	(4,78.51864284- 0.06012384635)
	(8,78.04645052-0.04556460325)
        (10, 77.92294207- 0.02245775397)
	(12,77.72173854-0.09217452346)
    };
    \addplot[C1!40, fill opacity=0.5] fill between[of=uppermisvaes2an2 and lowermisvaes2an2];
           \addplot[
        color=\Sfour,
        mark=*,
        mark options={solid},
        very thick
        ]
        coordinates {
        (2,79.15)
        (3,78.87)
        (4,78.6)
        (8,77.97)
        (10,77.82)
        (12,77.67)
        };
    
    \end{axis}
  \end{tikzpicture}
\end{minipage}
\begin{minipage}{0.32\textwidth}
  \centering

\begin{tikzpicture}[trim axis left, trim axis right]
\begin{axis}[
    ybar,
    width=0.999\linewidth,
    ylabel={$\text{seconds}/\text{epoch}$},
    bar width=0.925\mybarwidth,
    symbolic x coords={1,3,8,12},
    enlarge x limits= 0.1875, 
    xtick={1,3,8, 12},
    major grid style={dotted,black},
        legend style={at={(0.5,-0.3)},
        anchor=north,legend columns=2, font=\small},
    ymajorgrids=true,
    xlabel={$A$},
        tick style={
        tick align=inside,
        tick pos=left,
        major tick length=2pt
    },
    /pgf/bar shift auto/.style={
        /pgf/bar width/.initial=1pt,
    },
]
\addplot[
    fill=\Sfifty!69,
    draw=\Sfifty
] coordinates {
    (1,10.49314336)
    (3,23.64003682)
    (8,56.68440416)
    (12,83.07941034)
};

\addplot+[
fill=\Stwo!69, draw=\Stwo] coordinates {
    (1,10.135)
    (3,22.955)
    (8,55.20333333)
    (12,80.95666667)
};

\addplot+[
fill=C1!69, draw=C1] coordinates {
    (1,10.49314336)
    (3,11.11618503)
    (8,12.6939984)
    (12,13.95162986)
};
\end{axis}
\end{tikzpicture}
\end{minipage}
\begin{minipage}{0.32\textwidth}
  \centering
\vspace{-10pt}
\begin{tikzpicture}[trim axis left, trim axis right]
\begin{axis}[
    ybar,
    width=0.999\linewidth, 
    ylabel={No. parameters},
    bar width=0.925\mybarwidth,
    symbolic x coords={1,3,8,12},
    enlarge x limits= 0.1875, 
    xtick={1,3,8, 12},
    major grid style={dotted,black},
        legend style={at={(0.5,-0.3)},
        anchor=north,legend columns=2, font=\small},
    ymajorgrids=true,
    xlabel={$A$},
        tick style={
        tick align=inside,
        tick pos=left,
        major tick length=2pt
    },
    /pgf/bar shift auto/.style={
        /pgf/bar width/.initial=1pt,
    },
]
\addplot[
    fill=\Sfifty!69,
    draw=\Sfifty
] coordinates {
    (1,607021)
    (3,607501)
    (8,608701)
    (12,609661)
};

\addplot+[
fill=\Sfour!69, draw=\Sfour] coordinates {
    (1,553181)
    (3,1010389)
    (8,2153409)
    (12,3067825)
};
\end{axis}
\end{tikzpicture}
\end{minipage}
\centering
\fbox{
\begin{tabular}{cccccccccc}
\tikz\draw[\Sfifty,fill=\Sfifty!69] (0,0) circle (.7ex); & \modelname{A2A} (MISVAE)&
\tikz\draw[\Stwo,fill=\Stwo!69] (0,0) circle (.7ex); & \modelname{S2S}, Increasing S, A=50& 
\tikz\draw[C1,fill=C1!69] (0,0) circle (.7ex); &  \modelname{S2A}, $S=2$ &
\tikz\draw[\Sfour,fill=\Sfour!69] (0,0) circle (.7ex); & \modelname{SEMVAE} \\
\end{tabular}}
\caption{Comparison between SEMVAE and \modelname{MISVAE} using the S2S, A2A, and S2A estimators on \textbf{MNIST}: (a) NLL scores for increasing values of \(A\), (b) training time per epoch, and (c) number of hyperparameters for increasing \(A\) for SEMVAE compared to \modelname{MISVAE}. Note: The green curve represents the performance of \modelname{MISVAE} using the S2S estimator with \(S\) increasing , such that S=A on the x-axis, while \(A\) is held fixed at \(50\).}\label{fig:misvae_semvae_estimator_comparison}

\end{figure*}

\begin{table*}[b]
  \caption{NLL statistics for SOTA VAE architectures on \textbf{MNIST}. The Composite model is a SEMVAE model with hierarchical models, NFs and the VampPrior. For IWAE, $L$ is the number of importance samples used during training.}
  \label{tab:NLL SOTA}
  \centering
  \begin{tabular}{lccc}
    \toprule

    Model     & NLL  & No. Parameters & Seconds/epoch \\
    \midrule
    IWAE ($L =20$) \citep{burda2016importance} & $79.63$ & $720,541$ &$128.02$\\
    
    Hierarchical VAE w. VampPrior \citep{tomczak2018vae} & $78.45$ & $1,777,821$ &-\\
    NVAE \citep{vahdat2020nvae} & $78.01$ & $33,363,134$&-\\
    
    MAE \citep{ma2019mae} & $77.98$ & $1,565,570$\\
    Ensemble NVAE \citep{kviman2022multiple} & $77.77$ $\pm$ $0.2$& -&-\\
    Vanilla SEMVAE ($S=12$; \citet{kviman2023cooperation}) & $77.67$ & $8,549,344$&-\\
    
    Composite SEMVAE ($S=4$; \citet{kviman2023cooperation})     & $77.23$ $\pm$ 0.1 & $5,212,065$ &-\\
    CR-NVAE \citep{NEURIPS2021_6c19e0a6}     & $76.93$ & - &-\\
    \midrule
    \modelname{MISVAE} $\mathrm{S2S}$ ($A=50, S=20$; \textbf{our})     & $76.67$   &$\boldsymbol{618,781}$& $132.46$\\
    \modelname{MISVAE} $\mathrm{S2A}$ ($A=200, S=1$; \textbf{our})     & $75.43$ & $654,781$&$ \textbf{73.06}$\\
    \modelname{MISVAE} $\mathrm{S2A}$ ($A=800, S=1$; \textbf{our})     & $\boldsymbol{74.07}$ & $798,781$&$261.71$\\
    \bottomrule
  \end{tabular}
\end{table*}

With these criteria in mind, we let $p(x|z) = \prod_{n=1}^N p(x_n|z)$, where $x_n$ is the $n$-th $d_x$-dimensional data point, $N$ is the number of generated data points and $p(x_n|z) = \prod_{i=1}^{d_x}\text{Bernoulli}\left(x^{(i)}\Big|\text{sigmoid}\left(\theta^{(i)} + \sum_{j=1}^{i-1} \beta^{i-j} x^{(j)}\right)\right)$, with $\beta=0.1$, superindices are within parentheses, and $\theta = Wz$, where \begin{equation}
 W\in \mathbb{R}^{d_x\times d_z}, \quad  W_{u,v}\sim \log\mathcal{N}(0, 0.1).
\end{equation}
Finally, as prior we use the hierarchical \textit{Neal's funnel} model
\begin{equation}
    p(z_2, z_1) = \mathcal{N}(z_2|0, e^{z_1 / 2})\mathcal{N}(z_1|0, 3).
\end{equation}
We constrain our analysis to a 2-dimensional latent space for visualization purposes. An unnormlized posterior when $d_x=20$ and $N=5$ is depicted in Fig. \ref{fig:ar_toy_posterior}.

\textbf{Results.} The posterior is non-Gaussian (and intractable) and has an autoregressive likelihood function. We artificially force the energy term to be computed sequentially, let $d_x=20$ (a moderately large number) and chose $N=5$. In Fig. \ref{fig:ar_toy_posterior} we visualize the unnormalized posterior when $d_x=20$. All mixture components in the variational approximations are Gaussians with diagonal covariance matrices.

The purpose of this experiment is to compare the MISELBO values and total runtimes across different estimators, as well as to visualize the final variational approximations in the latent space. Accordingly, all models were subjected to training over $50,000$ epochs. Fig. \ref{fig:ar_toy_posterior_curves} presents the evolution of MISELBO during the training process for these estimators, alongside the total runtime required to finish the epochs. If $d_x$ was to be further increased, so would the cost of evaluating the likelihood. Hence, we expect our new estimators to provide good approximations in shorter runtime than the $\mathrm{A2A}$ estimator when $d_x$ is sufficiently high.

The results in Fig. \ref{fig:ar_toy_posterior_curves} confirm our expectations. When $(S=2, A=5)$, the S2A and A2A estimators achieve the same MISELBO scores, however, S2A requires only a fraction of the runtime. Meanwhile, for the said $S$ and $A$, S2S performs worse in terms of MISELBO score, albeit in short runtime. Interestingly, as S2S can be regarded as an ensemble, S2S with $(S=5, A=20)$ can outperform A2A ($A=5$) in terms of MISELBO scores in approximately the same training time per epoch. Finally, as the S2A is an unbiased estimator of MISELBO for any $S<A$, it excels when utilizing a large number of mixtures $A$ with a small $S$. Notably, the configuration of S2A with ($S=1,A=20$) not only boasts the fastest training time per epoch but also achieves the lowest overall negative MISELBO scores.

In Appendix \ref{app:toy_example} we include further implementation details and visualizations of the approximations in the latent space.

\subsection{Image Data}

To make our results comparable to current SOTA mixture architectures, we use the  same experiment setup and training-related hyperparameters as in \citet{kviman2023cooperation}. We refer to the benchmark \modelname{MISVAE} with the Mixture VAE used in \citet{kviman2023cooperation}, where separate encoder networks are used for each mixture component, as the separate encoder Mixture VAE (SEMVAE). We train on MNIST \cite{lecun-mnisthandwrittendigit-2010}, FashionMNIST \cite{DBLP:journals/corr/abs-1708-07747}, and CIFAR-10 \cite{cifar}. Additional details are provided in Appendix~\ref{app:exp_det}

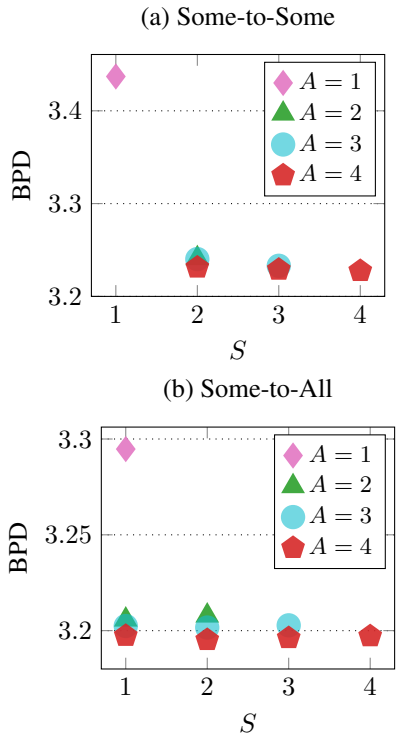
\begin{figure}[!htbp]
    \centering
    
      \centering
    \begin{minipage}{0.4\textwidth}
  \begin{tikzpicture}
    \begin{axis}[
        xlabel={$S$},
         ylabel={$ \text{BPD}$},
        legend pos=north east,
        major grid style={dotted,black},
        ymajorgrids=true, 
        title= (a) Some-to-Some,
        width=0.8*0.999\linewidth, 
        height=0.8*6cm,
        xtick={1,2,3,4},
        ymin=3.2,
    ]
  \addplot[only marks,
        color=C6,
        mark=diamond*,
        mark options={solid, scale=2, opacity=0.9},
        very thick
        ]
        coordinates {
        (1,3.437)
        };
\addplot[only marks,
        color=C2,
        mark options={solid, scale=2, opacity=0.9},
        mark=triangle*,
        very thick
        ]
  coordinates {
        (2,3.2410)
        };
    \addplot[only marks,
        color=C9,
        mark=*,
        mark options={solid, scale=2, opacity=0.6},
        very thick
        ]
  coordinates {
        (2, 3.2402)
        (3,3.2330)
        };
\addplot[only marks,
        color=C3,
        mark=pentagon*,
        mark options={solid, scale=2, opacity=0.9},
        very thick
        ]
  coordinates {
        (2,3.2312)
        (3,3.2289)
        (4,3.2275)
        };
    \legend{$A=1$,$A=2$, $A=3$,$A=4$}
    
    \end{axis}
  \end{tikzpicture}    
    \end{minipage}
    \begin{minipage}{0.4\textwidth}
  \begin{tikzpicture}
    \begin{axis}[
        xlabel={$S$},
         ylabel={$ \text{BPD}$},
        legend pos=north east,
        major grid style={dotted,black},
        ymajorgrids=true,
        title= (b) Some-to-All,
        width=0.8*0.999\linewidth, 
        height=0.8*6cm,
        xtick={1,2,3,4},
        ymin=3.18,
    ]
 \addplot[only marks,
        color=C6,
        mark=diamond*,
        mark options={solid, scale=2, opacity=0.9},
        very thick
        ]
        coordinates {
        (1,3.294744791)
        };
 \addplot[only marks,
        color=C2,
        mark options={solid, scale=2, opacity=0.9},
        mark=triangle*,
        very thick
        ]
  coordinates {
        (1, 3.205705628)
        (2,3.207795628)
        };
    \addplot[only marks,
        color=C9,
        mark=*,
        mark options={solid, scale=2, opacity=0.6},
        very thick
        ]
  coordinates {
        (1,3.202438871)
        (2, 3.20162391)
        (3,3.202777611)
        };
\addplot[only marks,
        color=C3,
        mark=pentagon*,
        mark options={solid, scale=2, opacity=0.9},
        very thick
        ]
  coordinates {
        (1,3.197333506)
        (2,3.1951721)
        (3,3.19618)
        (4,3.1971295)
        };
    \legend{$A=1$,$A=2$, $A=3$,$A=4$}
    \end{axis}
  \end{tikzpicture}    
    \end{minipage}
    \caption{BPD results on \textbf{CIFAR-10} for \modelname{MISVAE}s trained with different estimators and combinations of $S$ and $A$.}
    \label{fig:enter-label4}
\end{figure}

\textbf{Results.} The experiments conducted with real datasets confirm the insights gained from the Toy Experiment. In Figures \ref{fig:mnist_small_S_and_A_s2s_vs_sva} (MNIST) and \ref{fig:enter-label4} (CIFAR-10), we compare the NLL scores of \modelname{MISVAE} when trained using the $\mathrm{S2S}$, $\mathrm{S2A}$, and $\mathrm{A2A}$ estimators across progressively increasing values of $S$ and $A$. Note that the $\mathrm{A2A}$ estimator is used when $S=A$. We make two observations.
First, for a given $S$, the $\mathrm{S2S}$ estimator consistently achieves lower NLL scores as $A$ increases, with this effect being more pronounced for MNIST. Second, as depicted in Fig. \ref{fig:mnist_small_S_and_A_s2s_vs_sva}b, the epoch completion time for the S2S estimator remains constant as $A$ increases, assuming $S$ is fixed. These observations collectively suggest the potential for improving performance by significantly increasing $A$ while maintaining a small $S$. This hypothesis is further examined in Fig. \ref{fig:increasing_S_for_s2s}b, which shows that initially, increasing $A$ enhances performance. However, for large enough $A$, the performance gains start to wane, likely due to the decreasing probability of each component being updated in an epoch as $A$ increases.

We also make some useful observations regarding the S2A estimator. For a given value of $A$, the S2A estimator demonstrates approximately equivalent performance as $S$ varies, confirming its unbiasedness on both CIFAR-10 (Fig. \ref{fig:enter-label4}b) and MNIST (Fig. \ref{fig:mnist_small_S_and_A_s2s_vs_sva}a). Additionally, from Fig. \ref{fig:mnist_small_S_and_A_s2s_vs_sva}b, we observe a marginal increase in epoch completion time for a fixed \(S\) with increasing \(A\). The combination of being unbiased and efficient suggests that the S2A estimator can be scaled to a substantially larger number of mixture components. In Fig. \ref{fig:big_smnist}, we gradually increase $A$ up to hundreds of mixture components with \(S=1\) held fixed and achieve SOTA NLL scores on MNIST and FashionMNIST. 
In Table \ref{tab:NLL SOTA}, we compare the number of network parameters and the NLL scores of our models against other competitive models in this domain. Notably, our models use far fewer network parameters at superior performance in terms of NLL.

Finally, we compare MISVAE and our estimators against SEMVAE in Fig. \ref{fig:misvae_semvae_estimator_comparison}, which reveals that S2A, even with \(S=2\) held constant, either outperforms or matches the performance of SEMVAE. Also, it achieves this with only a slight increase in inference time and network parameters as \(A\) increases, unlike SEMVAE, where epoch completion time and network parameters escalate rapidly with $A$.
\begin{table}[!t]
\centering
\caption{FID scores evaluated on the MNIST test set}
\begin{tabular}{l|cccc}
\hline
S& $1$ & $2$ & $3$ & $4$ \\ \hline
FID & 10.87   & 10.02   & 9.89    & 9.70   \\ \hline
\end{tabular}
\label{tab:fid_inc_S}
\end{table}
We also see that S2S outperforms both S2A, albeit being considerably slower, and A2A, with approximately the same inference time. The superior performance of the S2S estimator is because it can be regarded as an ensemble of mixtures, thus providing a tighter bound compared to a single mixture model, as proven by \citet{kviman2022multiple}.

\begin{table*}[htbp]
    \centering
    \caption{NLL estimates on eight Phylogenetic Datasets. All VBPI methods use $1000$ importance samples, and the results are averaged over~100 runs and three independently trained models.
    The time evaluation for the likelihood function ($p$) and variational probability ($q$) was conducted on an i5-1130G7 CPU using one core using a CPU timer. \#$p$ and \#$q$ describe the number of times the likelihood and variational distribution needs to be evaluated respectively.}
    {\scriptsize
    \tabcolsep1pt
    \begin{tabularx}{\linewidth}{@{}>{\hsize=0.35\hsize}X>{\hsize=0.35\hsize}X>{\hsize=0.35\hsize}X>{\hsize=0.35\hsize}X|XXXXXXXX@{}}
\hline
\multicolumn{4}{X|}{Data} & DS1 & DS2 & DS3 & DS4 & DS5 & DS6 & DS7 & DS8 \\\hline
\multicolumn{4}{X|}{\# Taxa} & 27 & 29 & 36 & 41 & 50 & 50 & 59 & 64 \\
\multicolumn{4}{X|}{\# Sites} & 1949 & 2520 & 1812 & 1137 & 378 & 1133 & 1824 & 1008 \\
\multicolumn{4}{X|}{Run time $p$ (ms)} & 0.765 & 1.007 & 1.249 & 1.235 & 1.481 & 1.622 & 1.821 & 1.954 \\
\multicolumn{4}{X|}{Run time $q$ (ms)} & 0.778 & 1.082 & 1.425 & 1.427 & 1.816 & 1.652 & 2.130 & 2.359 \\
\hline
\multicolumn{2}{X}{$A$} &  \textbf{\#$p$} & \textbf{ \#$q$} &\multicolumn{8}{c}{\textbf{VBPI with NFs and Mixtures using (A2A)} \cite{kviman2023improved} }\\
\hline
 \multicolumn{2}{X}{1} & 1 & 1 & 7108.42 (.15)& 26367.72 (.06)& 33735.10 (.07)& 13330.00 (.23)& 8214.70 (.47)& 6724.50 (.45)& 37332.01 (.27)& 8650.68 (.46) \\
 \multicolumn{2}{X}{2} & 2 & 4 & 7108.40 (.10)& 26367.71 (.04)& 33735.10 (.05)& 13329.95 (.15)& 8214.62 (.26)& 6724.44 (.32)& 37331.96 (.19)& 8650.56 (.33) \\
 \multicolumn{2}{X}{3} & 3 & 9 & 7108.40 (.06)& 26367.70 (.03)& 33735.09 (.04)& 13329.94 (.11)& 8214.56 (.22)& 6724.40 (.23)& 37331.96 (.15)& 8650.54 (.30) \\
 \multicolumn{2}{X}{4} & 4 & 16 & 7108.40 (.05) & 26367.71 (.02) & 33735.09 (.02) & 13329.93 (.07) & 8214.57 (.16) & 6724.31 (.19) & 37331.92 (.13) & 8650.66 (.24) \\
\hline
$A$& $S$& \textbf{\#$p$}& \textbf{\#$q$} & \multicolumn{8}{c}{\textbf{VBPI with NFs and Mixtures using (S2A)}}\\
\hline
 2 & 1 & 1 & 2 & 7108.54 (.11) & 26367.71 (.04) & 33735.10 (.05) & 13329.97 (.16) & 8214.92 (.44) & 6724.49 (.35) & 37331.99 (.19) & 8651.32 (.42) \\ 
 3&1&1&3 & 7108.77 (.08) & 26367.71 (.03) & 33735.10 (.04) & 13330.05 (.12) & 8215.34 (.42) & 6724.49 (.29) & 37331.98 (.15) & 8651.85 (.35) \\ 
 3&2&2&6 & 7108.44 (.08) & 26367.71 (.03) & 33735.09 (.04) & 13329.94 (.10) & 8214.62 (.28) & 6724.42 (.27) & 37331.95 (.17) & 8650.76 (.29) \\ 
 4&1&1&4 & 7108.92 (.09) & 26367.71 (.02) & 33735.09 (.03) & 13330.05 (.11) & 8217.98 (.73) & 6724.61 (.28) & 37331.96 (.13) & 8652.01 (.32) \\ 
 4&2&2&8 & 7108.40 (.06) & 26367.70 (.03) & 33735.10 (.03) & 13329.97 (.09) & 8214.69 (.21) & 6724.41 (.22) & 37331.98 (.14) & 8650.85 (.25) \\ 
 4&3&3&12 & 7108.40 (.05) & 26367.71 (.02) & 33735.09 (.03) & 13329.93 (.08) & 8214.59 (.17) & 6724.38 (.21) & 37331.96 (.11) & 8650.72 (.22) \\ 
\hline
$A$& $S$& \textbf{\#$p$}& \textbf{\#$q$} & \multicolumn{8}{c}{\textbf{VBPI with NFs and Mixtures using (S2S)}}\\
\hline
  2&1&1&1 & 7108.41 (.10) & 26367.71 (.04) & 33735.09 (.06) & 13330.00 (.16) & 8214.77 (.38) & 6724.53 (.32) & 37332.00 (.20) & 8650.65 (.37) \\ 
  3&1&1&1 & 7108.41 (.08) & 26367.71 (.04) & 33735.09 (.05) & 13330.02 (.16) & 8214.90 (.35) & 6724.53 (.26) & 37332.00 (.18) & 8650.97 (.30) \\ 
  3&2&2&4 & 7108.42 (.08) & 26367.71 (.03) & 33735.09 (.04) & 13329.96 (.11) & 8214.63 (.23) & 6724.44 (.24) & 37331.98 (.13) & 8650.71 (.27) \\ 
  4&1&1&1 & 7108.74 (.15) & 26367.71 (.03) & 33735.09 (.04) & 13330.01 (.11) & 8215.73 (.48) & 6724.54 (.23) & 37332.00 (.15) & 8651.34 (.27) \\ 
  4&2&2&4 & 7108.41 (.06) & 26367.71 (.03) & 33735.09 (.04) & 13329.96 (.11) & 8214.72 (.20) & 6724.44 (.24) & 37331.98 (.13) & 8650.68 (.23) \\ 
  4&3&3&9 & 7108.40 (.06) & 26367.71 (.02) & 33735.09 (.03) & 13329.94 (.07) & 8214.59 (.16) & 6724.39 (.21) & 37331.94 (.15) & 8650.64 (.22) \\ 
 \hline
 \multicolumn{4}{X|}{}& \multicolumn{8}{c}{\textbf{MCMC and VBPI with GNNs (scores from \cite{zhang2019variational} and \cite{zhang2023learnable})}} \\
 \hline
  \multicolumn{4}{X|}{MrBayes\textsubscript{ss}}  & 7108.42 (.18) & 26367.57 (.48) & 33735.44 (.50) & 13330.06 (.54) & 8214.51 (.28) & 6724.07 (.86) & 37332.76 (2.42) & 8649.88 (1.75) \\
\multicolumn{4}{X|}{GGNN} & 7108.40 (.19) & 26367.73 (.10) & 33735.11 (.09) & 13329.95 (.19) & 8214.67 (.36) & 6724.38 (.42) & 37332.03 (.30) & 8650.68 (.48) \\
 \multicolumn{4}{X|}{EDGE}   & 7108.41 (.14) & 26367.73 (.07) & 33735.12 (.09) & 13329.94 (.19) & 8214.64 (.38) & 6724.37 (.40) & 37332.04 (.26) & 8650.65 (.45)
\end{tabularx}
    }
    \label{tab:ml_elbo}
    \vspace*{-1.01\baselineskip}
\end{table*}

\textbf{Generative performance.}

In order to understand how the decoder contributes to the impressive NLL scores, we trained four MISVAEs using S2A with $A = 4$ and $S = 1, …, 4$. For each model, we evaluated the decoders generative performance via the FID score on the MNIST test dataset. The results in Table~\ref{tab:fid_inc_S} demonstrate that the generative performance of the decoder increases when learned with increasingly expensive estimators. These results likely stem from the frequency of likelihood function evaluations. I.e., when \( S \leq A \), the likelihood function is evaluated less frequently during gradient calculations with respect to the decoder weights.

In Appendix \ref{app:generated_imgs} we also include visualizations of generated images from MNIST and FashionMNIST.

\subsection{Bayesian Phylogenetics}

\citet{kviman2023improved} achieved SOTA results by developing a mixture of variational phylogenetic posterior approximations, learnt via the \modelname{A2A} estimator. As empirically demonstrated in \cite{kviman2023cooperation, kviman2023improved}, the NLL improves monotonically with an increasing number of components. Using a large number of mixtures in mixture variational Bayesian phylogenetic inference (VBPI), however, lead to significant computational overhead and slow convergence, especially with large datasets. We address these issues with the \modelname{S2A} and \modelname{S2S} estimators, showing that they achieve comparable results with reduced computational requirements. Overall, our estimators enable us to minimize computational time without compromising performance.

We precisely followed the training procedure outlined by \citet{kviman2023improved} and, like them, adopted the VBPI algorithm with NFs and mixture distributions. We performed experiments on eight popular datasets for Bayesian phylogenetics \citep{Hedges1990-eu, Garey1996-ti, Yang2003-eg, Henk2003-dn, Lakner2008-ks, Zhang2001-hs, Yoder2004-ut, Rossman2001-ph}. As in \citet{zhang2019variational, zhang2020improved, moretti2021variational, koptagel2022vaiphy, zhang2022variational, zhang2023learnable}, we learn the approximations of branch-length and tree-topology distributions. As can be seen in Table \ref{tab:ml_elbo}, similar, or identical, NLL results are obtained with fewer likelihood and variational probability evaluations. The cost of each such operation is specified in the table. This implies that our estimators have successfully reduced the inference time of the SOTA model, while preserving the impressive NLL scores.

\section{Future Work}

A future avenue of research is to theoretically justify the results in this work by producing convergence results for mixtures. Convergence properties and guarantees for BBVI have previously been studied by \citet{Kim2023OnTC, domke2024provable, hotti2024benefits} for when the variational family belongs to the location-scale family, which does not include mixtures. Since the mixture differential entropy can no longer be computed in closed form, one would need to consider stochastic estimates of both the energy and the differential entropy gradients of the variational objective. In this context, it would be interesting to consider the sticking-the-landing (STL) estimator, previously studied in the setting of BBVI by both  \cite{kim2024linear} and \cite{domke2024provable}. It turns out that with the STL estimator, a linear convergence rate can be achieved when the variational family contains the true posterior, which approximately holds for a mixture of Gaussians given a sufficient number of components.

Gradient-based inference of mixture weights in BBVI is non-trivial \cite{morningstar2021automatic}, and there are multiple approaches to reparameterized sampling of the components \cite{figurnov2018implicit, morningstar2021automatic}. However, an alternative is resampling-based inference, as in the adaptive IS literature. Recently, \citet{kviman2024variational} proposed a new resampling methodology which could be applied in mixture BBVI to weight components such that the ELBO is maximized via a combinatorial optimization algorithm.

\section{Conclusion}

In this work, we have addressed the scalability and efficiency challenges faced by mixtures in BBVI, by introducing MISVAE and the novel estimators of the MISELBO: the Some-to-All and Some-to-Some estimators. Our contributions significantly decrease the number required learnable parameters and the computational costs associated with increasing the number of mixture components, enabling scalability of mixture models without compromising performance.

\section*{Impact Statement}

This paper presents work with the goal to advance the field of machine learning. There are many potential societal consequences of our work, none which we feel must be specifically highlighted here.

\section*{Acknowledgments}
First, we acknowledge the insightful comments provided by the reviewers, which have helped
improve our work. This project was made possible through funding from the Swedish Foundation for
Strategic Research grants BD15-0043 and ID19-0052, and from the Swedish Research Council grant 2018-05417\_VR.
The computations and data handling were enabled by resources provided by the Swedish National
Infrastructure for Computing (SNIC), partially funded by the Swedish Research Council through
grant agreement no. 2018-05973.

\bibliography{main}
\bibliographystyle{icml2024}

\newpage
\appendix
\onecolumn

\section{Expected Values of the Estimators} \label{sec:upper_bounds}
\label{app:exp_values}
Here we prove the results presented in Section~\ref{sec:estimators}.

There are $A$ components in total and thus ${A \choose S}$ subsets $\Phi$ of cardinality $S$ (without replacement). Furthermore, summed over all subsets, an arbitrary component, $q_{\phi_k}$, is observed ${A \choose S} \frac{S}{A} = {A-1 \choose S-1}$ times. Define a uniform distribution in the space of all $S$-subsets, $\Omega^S$,
 \begin{equation}
 \varphi\left(\Phi\right)  = \frac{1}{{A \choose 
S}}
 \end{equation}
 and the \modelname{Some-to-All} estimator as
 \begin{align}
 \label{eq:s2a_app}
\tilde{\mathcal{L}}_\text{\modelname{Some-to-All}}^{M,S,J} := \frac{1}{M}\sum_{m=1}^M\frac{1}{S}\sum_{\phi_{k}\in \Phi^{m}}\frac{1}{J}\sum_{j'=1}^J\log\frac{p(x, z^{j'}_{k})}{\frac{1}{A}\sum^A_{j=1} q_{\phi_{j}}(z^{j'}_{k}|x)}, \quad \Phi^{1},...,\Phi^{M}\sim \varphi(\Phi),\quad z_k^{1},...,z_k^{J} \sim q_{\phi_k}(z|x).
\end{align}

\newtheorem*{thm41}{Theorem \ref{stwoaunbiasedcorollary}}
\begin{thm41}
\stwoaunbiasedthm
\end{thm41}

\begin{proof}
    Taking expectations and utilizing that the expectation is a linear operator, the R.H.S. of Eq. \eqref{eq:s2a_app} is 
    \begin{align}
        &\frac{1}{M}\sum_{m=1}^M
        \mathbb{E}_{\varphi(\Phi)}\left[
        \frac{1}{S}\sum_{\phi_{k}\in \Phi}\frac{1}{J}\sum_{j'=1}^J
        \mathbb{E}_{q_{\phi_{k}}(z|x)}\left[
        \log\frac{p(x, z_{k})}{\frac{1}{A}\sum^A_{j=1} q_{\phi_{j}}(z_{k}|x)}\right]\right]= \\
        & \mathbb{E}_{\varphi(\Phi)}\left[
        \frac{1}{S}\sum_{{\phi_{k}}\in \Phi}
        \mathbb{E}_{q_{\phi_{k}}(z|x)}\left[
        \log\frac{p(x, z_{k})}{\frac{1}{A}\sum^A_{j=1} q_{\phi_{j}}(z_{k}|x)}\right]\right]=\label{expression_1}\\
    &\sum_{\Phi\in\Omega^S} \frac{1}{{A \choose S}}
        \frac{1}{S}\sum_{{\phi_{k}}\in \Phi}
        \mathbb{E}_{q_{\phi_{k}}(z|x)}\left[
        \log\frac{p(x, z_{k})}{\frac{1}{A}\sum^A_{j=1} q_{\phi_{j}}(z_{k}|x)}\right]=\label{equality_s2a}\\
         & \frac{1}{{A \choose S}}
        \frac{1}{S}\sum_{{\phi_{k}}\in \Phi}
        \mathbb{E}_{q_{\phi_{k}}(z|x)}\left[
        \log\frac{p(x, z_{k})}{\frac{1}{A}\sum^A_{j=1} q_{\phi_{j}}(z_{k}|x)}\right]=\label{equality_s2a}\\
        & \frac{1}{{A \choose S}}
        \frac{1}{S}{A -1  \choose S -1}\sum_{k=1}^A
        \mathbb{E}_{q_{\phi_{k}}(z|x)}\left[
        \log\frac{p(x, z_{k})}{\frac{1}{A}\sum^A_{j=1} q_{\phi_{j}}(z_{k}|x)}\right]=\\
        &\frac{1}{A}\sum_{k=1}^A
        \mathbb{E}_{q_{\phi_{k}}(z|x)}\left[
        \log\frac{p(x, z_{k})}{\frac{1}{A}\sum^A_{j=1} q_{\phi_{j}}(z_{k}|x)}\right] = \mathcal{L}_\text{MIS},
    \end{align}
    where the equality in Eq. \eqref{equality_s2a} holds as $$\sum_{\Phi\in \Omega^S}\sum_{\phi_k\in\Phi} q_{\phi_k} = \sum_{\Phi\in \Omega^S}\sum_{k=1}^A \mathbbm{1}_{\{\phi_k \in \Phi\}}(\phi_k)q_{\phi_k}(z|x) = {A -1  \choose S -1}\sum_{k=1}^Sq_{\phi_k}(z|x),$$ following the statement in the beginning of this section; component $q_{\phi_k}$ will be observed ${A\choose S}\frac{S}{A}$ times in all possible subsets.
\end{proof}

From the theorem above we can provide the following corollary.

\newtheorem*{corollary2}{ \textbf{Corollary \ref{stwoalower_bound}}}
\begin{corollary2}
\newstwoaunbiasedcorollary
\end{corollary2}

\begin{proof}
    From Theorem \ref{stwoaunbiasedcorollary}, we have $\mathbb{E}\left[\tilde{\mathcal{L}}_\text{\modelname{S2A}}^{M,S,J}\right] = \mathcal{L}_\text{MIS}$ and from \citet{kviman2022multiple} it is known that $\mathcal{L}_\text{MIS} \leq \log p_\theta(x)$.
\end{proof}

Next, we turn to the examination of the expected value of the \modelname{Some-to-Some} estimator
\begin{equation}
    \tilde{\mathcal{L}}_\text{\modelname{S2S}}^{M,S,J} := \frac{1}{M}\sum_{m=1}^M\frac{1}{S}\sum_{\phi_{k}\in \Phi^{m}}\frac{1}{J}\sum_{j'=1}^J\log\frac{p(x, z^{j'}_{k})}{\frac{1}{S}\sum_{\phi_{j} \in \Phi^m} q_{\phi_{j}}(z^{j'}_{k}|x)}, \quad \Phi^{1},...,\Phi^{M}\sim \varphi(\Phi),\quad z_k^{1},...,z_k^{J} \sim q_{\phi_k}(z|x).
\end{equation}
Note that this estimator diverges from the \modelname{S2A} estimator merely in the denominator inside the logarithm. However, this change clearly implies that the \modelname{S2S} estimator is not an unbiased estimator of Eq. \eqref{eq:miselbo}.
In fact, its expected value is instead a lower bound on MISELBO, as we will show here. 
\begin{theorem}
\label{thm:s2sleqmiselbo}
    The expected value of the \modelname{Some-to-Some} estimator is a lower bound on MISELBO, i.e.
    \begin{align*}
        \mathbb{E}\left[
    \tilde{\mathcal{L}}_\text{\modelname{S2S}}^{M,S,J}
        \right]\leq \mathcal{L}_\text{MIS}.
    \end{align*}
\end{theorem}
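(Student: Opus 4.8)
The plan is to take the expectation of the $\tilde{\mathcal{L}}_\text{\modelname{S2S}}^{M,S,J}$ estimator, split the logarithm into a numerator (energy) part and a denominator (entropy) part, show that the energy part coincides \emph{exactly} with that of MISELBO, and then identify the leftover discrepancy as an average of Kullback--Leibler divergences, which is non-negative. Throughout I write $m_\Phi(z) = \frac{1}{S}\sum_{\phi_j\in\Phi} q_{\phi_j}(z|x)$ for the subset mixture and $m_A(z) = \frac{1}{A}\sum_{a=1}^A q_{\phi_a}(z|x)$ for the full mixture.

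First I would use linearity of expectation and the tower property to average out the Monte Carlo replicas indexed by $m$ and $j'$, which collapses the $\frac{1}{M}\sum_m$ into $\mathbb{E}_{\varphi(\Phi)}$ and the $\frac{1}{J}\sum_{j'}$ into $\mathbb{E}_{q_{\phi_k}(z|x)}$, giving
\begin{align*}
\mathbb{E}\left[\tilde{\mathcal{L}}_\text{\modelname{S2S}}^{M,S,J}\right] = \mathbb{E}_{\varphi(\Phi)}\left[\frac{1}{S}\sum_{\phi_k\in\Phi}\mathbb{E}_{q_{\phi_k}(z|x)}\left[\log\frac{p(x,z_k)}{m_\Phi(z_k)}\right]\right].
\end{align*}
Next, I would put $\mathcal{L}_\text{MIS}$ on the same index set as this expression. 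Using the counting identity already established at the start of this appendix (each component lies in $\binom{A-1}{S-1}$ of the $\binom{A}{S}$ subsets, so that $\frac{1}{\binom{A}{S}}\frac{1}{S}\binom{A-1}{S-1}=\frac{1}{A}$), MISELBO can be rewritten as
\begin{align*}
\mathcal{L}_\text{MIS} = \sum_{\Phi\in\Omega^S}\frac{1}{\binom{A}{S}}\frac{1}{S}\sum_{\phi_k\in\Phi}\mathbb{E}_{q_{\phi_k}(z|x)}\left[\log\frac{p(x,z_k)}{m_A(z_k)}\right].
\end{align*}
Subtracting the two expressions, the $\log p(x,z_k)$ terms cancel term-by-term and I am left with
\begin{align*}
\mathcal{L}_\text{MIS}-\mathbb{E}\left[\tilde{\mathcal{L}}_\text{\modelname{S2S}}^{M,S,J}\right] = \sum_{\Phi\in\Omega^S}\frac{1}{\binom{A}{S}}\frac{1}{S}\sum_{\phi_k\in\Phi}\mathbb{E}_{q_{\phi_k}(z|x)}\left[\log\frac{m_\Phi(z_k)}{m_A(z_k)}\right].
\end{align*}

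The decisive step is then to collapse the inner average over $k\in\Phi$: since $\frac{1}{S}\sum_{\phi_k\in\Phi} q_{\phi_k} = m_\Phi$, the inner sum of expectations is exactly an expectation under the subset mixture, $\mathbb{E}_{m_\Phi}\left[\log\frac{m_\Phi}{m_A}\right] = D_{\mathrm{KL}}(m_\Phi \,\|\, m_A)$. Hence
\begin{align*}
\mathcal{L}_\text{MIS}-\mathbb{E}\left[\tilde{\mathcal{L}}_\text{\modelname{S2S}}^{M,S,J}\right] = \frac{1}{\binom{A}{S}}\sum_{\Phi\in\Omega^S} D_{\mathrm{KL}}(m_\Phi \,\|\, m_A) \geq 0,
\end{align*}
because every KL divergence is non-negative, which establishes $\mathbb{E}[\tilde{\mathcal{L}}_\text{\modelname{S2S}}^{M,S,J}]\leq \mathcal{L}_\text{MIS}$.

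I expect the main obstacle to be the recognition that the inequality \emph{cannot} be proved pointwise or per-component: comparing $\log m_\Phi$ with $\log m_A$ for a fixed component (or applying Jensen to the subset average of logarithms) fails, and in fact such a naive bound points in the wrong direction, as one sees already for $A=2,\,S=1$ where $\log q_{\phi_k}\geq\log m_A$ is false. The resolution is to aggregate the $S$ components of each fixed subset $\Phi$ first, exploiting $\frac{1}{S}\sum_{\phi_k\in\Phi}q_{\phi_k}=m_\Phi$ to convert the sum of expectations under the individual $q_{\phi_k}$ into a single expectation under $m_\Phi$; only at that point does the KL structure emerge and non-negativity become immediate.
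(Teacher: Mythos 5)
Your proof is correct and follows essentially the same route as the paper's: both rewrite $\mathcal{L}_\text{MIS}$ as an expectation over subsets $\Phi\sim\varphi$ (you via the counting identity, the paper via the already-proved unbiasedness of the \modelname{S2A} estimator, which is the same fact), subtract the two expressions so the $\log p(x,z_k)$ terms cancel, and then collapse the inner sum over $\phi_k\in\Phi$ into a single expectation under the subset mixture to reveal an average of KL divergences $D_{\mathrm{KL}}\left(\frac{1}{S}\sum_{\phi_k\in\Phi}q_{\phi_k}\,\middle\Vert\,\frac{1}{A}\sum_{j=1}^A q_{\phi_j}\right)\geq 0$. Your closing remark about why a pointwise or per-component comparison fails is a nice addition, but the substance of the argument is identical to the paper's.
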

\begin{proof}
    Using that $\mathbb{E}\left[
    \tilde{\mathcal{L}}_\text{\modelname{S2A}}^{M,S,J}
        \right] = \mathcal{L}_\text{MIS}$, we will directly check if $\mathbb{E}\left[
    \tilde{\mathcal{L}}_\text{\modelname{S2S}}^{M,S,J}
        \right] \leq \mathbb{E}\left[
    \tilde{\mathcal{L}}_\text{\modelname{S2A}}^{M,S,J}
        \right]$. From the formulation in Eq. \eqref{expression_1}, the inequality can equivalently be expressed as
        \begin{align}
           \mathbb{E}_{\varphi(\Phi)}\left[
        \frac{1}{S}\sum_{{\phi_{s}}\in \Phi}
        \mathbb{E}_{q_{\phi_{s}}(z_s|x)}\left[
        \log\frac{p(x, z_{s})}{\frac{1}{S}\sum_{\phi_k \in \Phi} q_{\phi_{k}}(z_{s}|x)}\right]\right]  \leq \mathbb{E}_{\varphi(\Phi)}\left[
        \frac{1}{S}\sum_{{\phi_{s}}\in \Phi}
        \mathbb{E}_{q_{\phi_{s}}(z_s|x)}\left[
        \log\frac{p(x, z_{s})}{\frac{1}{A}\sum^A_{j=1} q_{\phi_{j}}(z_{s}|x)}\right]\right].
        \end{align}
        Subtracting the R.H.S. with the L.H.S., we get
        \begin{align}
           &\mathbb{E}_{\varphi(\Phi)}\left[
        \frac{1}{S}\sum_{{\phi_{s}}\in \Phi}
        \mathbb{E}_{q_{\phi_{s}}(z_s|x)}\left[
        \log\frac{p(x, z_{s})}{\frac{1}{A}\sum^A_{j=1} q_{\phi_{j}}(z_{s}|x)}\right]\right] - \mathbb{E}_{\varphi(\Phi)}\left[
        \frac{1}{S}\sum_{{\phi_{s}}\in \Phi}
        \mathbb{E}_{q_{\phi_{s}}(z_s|x)}\left[
        \log\frac{p(x, z_{s})}{\frac{1}{S}\sum_{\phi_k \in \Phi} q_{\phi_{k}}(z_{s}|x)}\right]\right]\\
        =& \mathbb{E}_{\varphi(\Phi)}\left[\frac{1}{S}
        \sum_{{\phi_{s}}\in \Phi}
        \mathbb{E}_{q_{\phi_{s}}(z_s|x)}\left[
        \log\frac{p(x, z_{s})}{\frac{1}{A}\sum^A_{j=1} q_{\phi_{j}}(z_{s}|x)} - 
        \log\frac{p(x, z_{s})}{\frac{1}{S}\sum_{\phi_k \in \Phi} q_{\phi_{k}}(z_{s}|x)}\right]\right]\\
        =&\mathbb{E}_{\varphi(\Phi)}\left[\frac{1}{S}
        \sum_{{\phi_{s}}\in \Phi}
        \mathbb{E}_{q_{\phi_{s}}(z_s|x)}\left[
        \log\frac{\frac{1}{S}\sum_{\phi_k \in \Phi} q_{\phi_{k}}(z_{s}|x)}{\frac{1}{A}\sum^A_{j=1} q_{\phi_{j}}(z_{s}|x)}\right]\right]\\ =&
        \mathbb{E}_{\varphi(\Phi)}\left[
        \text{KL}\left(
        \frac{1}{S}\sum_{\phi_k \in \Phi}q_{\phi_{k}}(z|x)\Bigg\Vert \frac{1}{A}\sum^A_{j=1} q_{\phi_{j}}(z|x)
        \right)
        \right] \geq 0,
        \end{align}
        where the final inequality holds as the KL, and thus the average of KLs, is non-negative.
\end{proof}

From the result above, it furthermore follows directly that the expected value of the \modelname{S2S} estimator is a lower bound on the marginal log-likelihood.
\begin{corollary}
    The expected value of the \modelname{Some-to-Some} estimator is a lower bound on the marginal log-likelihood, i.e.
    \begin{align*}
        \mathbb{E}\left[
    \tilde{\mathcal{L}}_\text{\modelname{S2S}}^{M,S,J}
        \right]\leq \log p_\theta(x).
    \end{align*}
\end{corollary}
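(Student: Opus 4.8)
The plan is to establish this corollary by a straightforward transitivity argument, combining the inequality just proven in Theorem \ref{thm:s2sleqmiselbo} with the standard result that MISELBO lower-bounds the marginal log-likelihood. No new analysis is required, since both ingredients are already available to us, and the corollary is intended precisely as an immediate consequence of the theorem that precedes it.

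First, I would invoke Theorem \ref{thm:s2sleqmiselbo}, which states that the expected value of the \modelname{Some-to-Some} estimator is bounded above by MISELBO,
\begin{align*}
\mathbb{E}\left[\tilde{\mathcal{L}}_\text{\modelname{S2S}}^{M,S,J}\right] \leq \mathcal{L}_\text{MIS}.
\end{align*}
Second, I would recall the result of \citet{kviman2022multiple} that MISELBO is itself a lower bound on the marginal log-likelihood, $\mathcal{L}_\text{MIS} \leq \log p_\theta(x)$. This is exactly the same supporting fact already used to prove Corollary \ref{stwoalower_bound} for the \modelname{S2A} estimator, and it follows from Jensen's inequality applied to the MISELBO expectation.

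Chaining these two inequalities by transitivity then yields
\begin{align*}
\mathbb{E}\left[\tilde{\mathcal{L}}_\text{\modelname{S2S}}^{M,S,J}\right] \leq \mathcal{L}_\text{MIS} \leq \log p_\theta(x),
\end{align*}
which is the desired bound. The only genuine ingredient beyond Theorem \ref{thm:s2sleqmiselbo} is the sandwich inequality $\mathcal{L}_\text{MIS} \leq \log p_\theta(x)$, so there is no real obstacle here: once that previously established fact is invoked, the result is immediate. In other words, the difficulty of this corollary lies entirely in Theorem \ref{thm:s2sleqmiselbo} (the KL-divergence gap computation), while the corollary itself is a one-line consequence.
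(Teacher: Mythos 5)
Your proposal is correct and matches the paper's own proof exactly: both chain Theorem \ref{thm:s2sleqmiselbo} (the bound $\mathbb{E}[\tilde{\mathcal{L}}_\text{\modelname{S2S}}^{M,S,J}] \leq \mathcal{L}_\text{MIS}$) with the known fact $\mathcal{L}_\text{MIS} \leq \log p_\theta(x)$ from \citet{kviman2022multiple} via transitivity. Nothing further is needed.
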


\newtheorem*{corollary1}{ \textbf{Corollary \ref{stwoaunbiasedcorollary}}}
\begin{corollary1}
\newstwoaunbiasedcorollary
\end{corollary1}

\begin{proof}
    Recalling that $\mathcal{L}_\text{MIS}\leq \log p_\theta(x)$, it follows from Theorem \ref{thm:s2sleqmiselbo} that 
    $\mathbb{E}\left[
    \tilde{\mathcal{L}}_\text{\modelname{S2S}}^{M,S,J}
        \right]\leq \mathcal{L}_\text{MIS}\leq\log p_\theta(x)$.
\end{proof}

\section{Extension to Weighted Mixture}
\label{app:weighted_proof}
We now extend the unbiasedness result to the case of weighted mixtures. We repeat Theorem \ref{w_thm} from the main text. 
\begin{theorem}
    The Some-to-All estimator is an unbiased estimator of MISELBO for arbitrary mixture weights.
\end{theorem}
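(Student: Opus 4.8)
The plan is to mirror the unweighted argument from the proof of Theorem~\ref{stwoaunbiasedcorollary}, replacing the uniform averaging by a weight-dependent reweighting of the sampled terms so that the same counting identity still collapses the sum over subsets onto the correct weighted target. First I would fix notation: for weights $w_1,\dots,w_A \ge 0$ with $\sum_a w_a = 1$, the weighted MISELBO is
\begin{equation*}
\mathcal{L}_\textnormal{MIS}^w = \sum_{a=1}^A w_a\, \mathbb{E}_{q_{\phi_a}(z_a|x)}\!\left[\log \frac{p_\theta(x, z_a)}{\sum_{a'=1}^A w_{a'} q_{\phi_{a'}}(z_a|x)}\right],
\end{equation*}
obtained by expanding $\mathbb{E}_{\sum_a w_a q_{\phi_a}}[\,\cdot\,]$ over the mixture, and the weighted \modelname{Some-to-All} estimator is
\begin{equation*}
\widetilde{\mathcal{L}}_\text{\modelname{S2A}}^w := \frac{A}{S}\sum_{\phi_k \in \Phi} w_k \log \frac{p_\theta(x|z_k)p_\theta(z_k)}{\sum_{a=1}^A w_a q_{\phi_a}(z_k|x)}, \qquad \Phi \sim \varphi(\Phi), \quad z_k \sim q_{\phi_k}(z_k|x).
\end{equation*}
Setting $w_k \equiv 1/A$ recovers the original \modelname{S2A} estimator, confirming this is a genuine generalization.

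Second, I would take expectations exactly as in Eq.~\eqref{expression_1}: by linearity and the tower property, condition on $\Phi$ and push the inner expectation over $z_k$ inside, abbreviating the per-component term as $g_k := \mathbb{E}_{q_{\phi_k}(z_k|x)}[\log (p_\theta(x,z_k)/\sum_a w_a q_{\phi_a}(z_k|x))]$. The expectation then reduces to a purely combinatorial average over the uniform subset distribution $\varphi$,
\begin{equation*}
\mathbb{E}\!\left[\widetilde{\mathcal{L}}_\text{\modelname{S2A}}^w\right] = \frac{A}{S}\sum_{\Phi \in \Omega^S} \frac{1}{\binom{A}{S}} \sum_{\phi_k \in \Phi} w_k\, g_k = \frac{A}{S}\,\frac{1}{\binom{A}{S}} \sum_{k=1}^A w_k\, g_k \sum_{\Phi \in \Omega^S} \mathbbm{1}_{\{\phi_k \in \Phi\}}.
\end{equation*}

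Third, I would invoke the same counting identity used in the unweighted proof: each component lies in exactly $\binom{A-1}{S-1}$ of the $\binom{A}{S}$ subsets, so $\sum_{\Phi} \mathbbm{1}_{\{\phi_k \in \Phi\}} = \binom{A-1}{S-1}$, independently of $k$. Substituting and using $\tfrac{A}{S}\binom{A-1}{S-1}/\binom{A}{S} = 1$ yields $\mathbb{E}[\widetilde{\mathcal{L}}_\text{\modelname{S2A}}^w] = \sum_k w_k g_k = \mathcal{L}_\textnormal{MIS}^w$, establishing unbiasedness.

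The main obstacle is conceptual rather than computational: choosing the correct reweighting of the sampled terms. Because the subsets are still drawn uniformly, every component has the same inclusion probability $\pi_k = S/A$, so the weight $w_k$ must be reintroduced explicitly through the factor $A w_k / S = w_k/\pi_k$; this is precisely a Horvitz--Thompson correction, and it is what lets the weight-independent counting identity deliver a weight-dependent target. An alternative worth noting is to instead sample subsets with weight-dependent inclusion probabilities and keep the term coefficients uniform: the Horvitz--Thompson argument gives unbiasedness for any design with $\pi_k > 0$, but the uniform-design version above parallels the existing proof most directly and imposes no constraint relating $S$ to the $w_k$.
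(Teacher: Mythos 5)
Your proof is correct and follows essentially the same route as the paper's: the paper's weighted estimator uses the coefficient $u_k = w_k/(w_A\,\mathbb{E}[H_k])$, which is exactly your Horvitz--Thompson correction $w_k/\pi_k$ specialized to inclusion probability $\mathbb{E}[H_k]$, and the expectation computation collapses in the same way. The only differences are cosmetic generality: the paper states the argument for an arbitrary subset distribution $\varphi$ (so the inclusion probabilities need not equal $S/A$, which is precisely the alternative you sketch in your closing remark) and for the importance-weighted objective with $L\geq 1$ samples inside the logarithm, neither of which changes the substance of the argument.
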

\begin{proof}
First let us define the {\em weighted} A2A estimator
\begin{equation}
\widetilde{\mathcal{L}}_\textnormal{{A2A}} = \sum_{k=1}^A \frac{w_k}{w_A} \log \frac{p_{\theta}(x|z_{k})p_\theta(z_{k})}{\sum_{{a^\prime}=1}^A  \frac{w_{a^\prime}}{w_A} q_{\phi_{a^\prime}}(z_k|x)},
\end{equation}
where $z_{a} \sim q_{\phi_{a}(z|x)}$, $w_a$ are the unnormalized weights and 
\[
w_A := \sum_{k=1}^A w_k.
\]

Let $(I, \Sigma)$ be a measurable space with sample space 
\begin{equation}
    I = \left\{ H \in \{0, 1\}^A :  \sum_{j\in[A]} H_j = S \right\}
\end{equation}
 and probability measure $\varphi(H)$.
 
 For the sake of generality we will consider an estimator of the importance weighted MISELBO (i.e. with $L\geq1$). Let
\[
\mathcal{L}_{S2A}^{M,S,\varphi} := \frac{1}{M} \sum^M_{m=1} \sum_{k = 1}^A u_k H_k^m  \log \frac{1}{L} \sum^L_{l=1}\frac{p(x,z_k^{l})}{ \sum_{a'=1}^A \frac{w_{a'}}{w_A} q_{\phi_{a'}}(z_k^{l} \vert x)} \quad(*)
\] 
where 

\[
H^1,\ldots, H^M \sim \varphi, \quad \quad z_k^{l}\sim q_{\phi_k}(z \vert x), \quad \text{and} \quad u_k := \frac{w_k}{w_A \mathbb{E}[H_k]}
\]

Now we show that the expectation of $(*)$ is equal to the importance weighted MISELBO objective with arbitrary mixture weights.
\[
\mathbb{E}[\mathcal{L}_{S2A}^{M,S,\varphi}] = \mathbb{E}_{H^1,\ldots,H^M \sim \varphi} \left[ 
\frac{1}{M} \sum^M_{m=1} \sum_{k=1}^A u_k H_k^m 
\mathbb{E}_{ z^{l}_k \sim q_{\phi_k}(z\vert x) } \left[ \log \frac{1}{L} \sum^L_{l=1}\frac{p(x,z^{l}_k)}{ \sum_{a'} \frac{w_{a'}}{w_A} q_{\phi_{a'}}(z^{l}_k \vert x)} \right] \right]
\]

\[
=\mathbb{E}_{H\sim \varphi } \left[  \sum_{k = 1}^A  u_k {H_k} 
\mathbb{E}_{z_k^l}\left[  \log \frac{1}{L} \sum^L_{l=1}\frac{p(x,z_k^l)}{ \sum_{a'} \frac{w_{a'}}{w_A} q_{\phi_{a'}}(z_k^l \vert x)} \right] \right]
\]

\[=
\sum_{H\in I} \varphi(H)  \left[  \sum_{k=1}^A u_k  H_k 
\mathbb{E}_{z_k^l}\left[  \log \frac{1}{L} \sum^L_{l=1}\frac{p(x,z_k^l)}{ \sum_{a'} \frac{w_{a'}}{w_A} q_{\phi_{a'}}(z_k^l \vert x)} \right] \right]
\]

\[=
  \sum_{k = 1}^A u_k \left( \sum_{H\in I^S} \varphi(H) H_k \right)
\mathbb{E}_{z_k^l}\left[  \log \frac{1}{L} \sum^L_{l=1}\frac{p(x,z_k^l)}{ \sum_{a'} \frac{w_{a'}}{w_A} q_{\phi_{a'}}(z_k^l \vert x)} \right] 
\]

\[=
  \sum_{k=1}^A u_k \mathbb{E}_\varphi [H_k] \mathbb{E}_{z_k^l}\left[  \log \frac{1}{L} \sum^L_{l=1}\frac{p(x,z_k^l)}{ \sum_{a'} \frac{w_{a'}}{w_A} q_{\phi_{a'}}(z_k^l \vert x)} \right]
\]

\[=
  \sum_{k=1}^A \frac{w_k}{w_A} \mathbb{E}_{z_k^l}\left[  \log \frac{1}{L} \sum^L_{l=1}\frac{p(x,z_k^l)}{ \sum_{a'} \frac{w_{a'}}{w_A} q_{\phi_{a'}}(z_k^l \vert x)} \right].
\]
That is the same as that of the weighted A2A.
\end{proof}

\section{Block Diagrams MISVAE - S2A and A2A}

In Fig. \ref{fig:misvae-architecture-s2a-a2a} we display block diagrams for \modelname{MISVAE} with the $\mathrm{S2A}$ and $\mathrm{A2A}$ estimators.
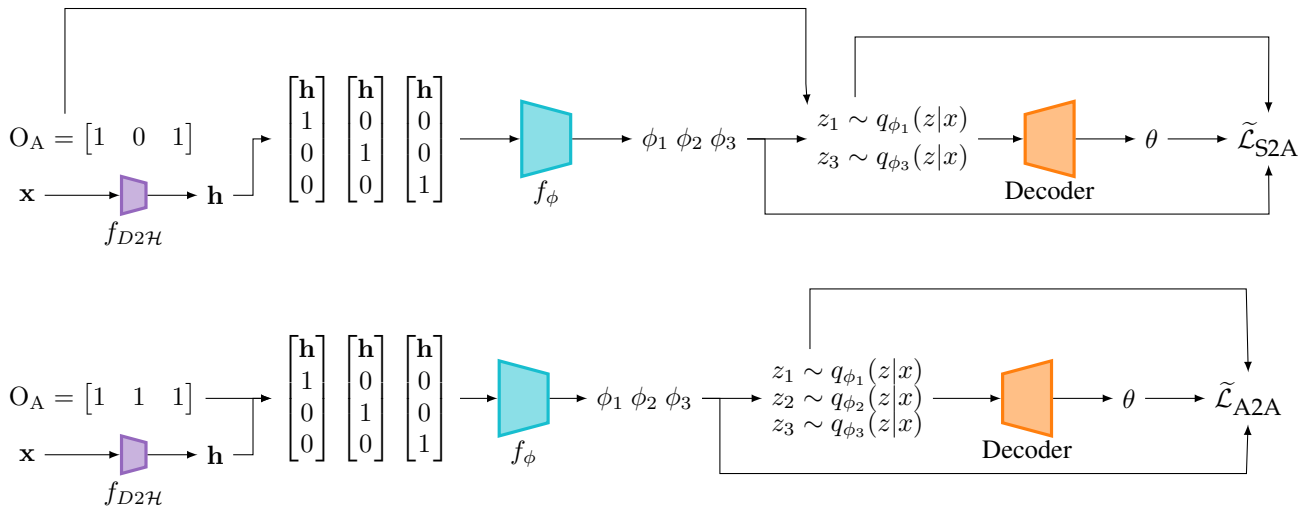
\begin{figure}[H]

    \centering
\begin{tikzpicture}
    \node[yshift=0.25cm, xshift = -6cm] (onehot) {$\mathrm{O}_\mathrm{A} = \begin{bmatrix} 1 & 0 & 1 \end{bmatrix}$};
    \node[yshift=-0.5cm, xshift = -7cm] (data) {$\bx$};
    \node[yshift=-0.5cm, xshift = -4.5cm] (hidden) {$\bh$};
    \node[yshift=0.25cm, xshift = -2.5cm] (twoonehots) {$\begin{bmatrix} \bh\\1\\ 0 \\ 0 \end{bmatrix}$ $\begin{bmatrix} \bh\\0\\ 1 \\ 0 \end{bmatrix}$ $\begin{bmatrix} \bh\\0\\ 0 \\ 1 \end{bmatrix}$ };
    
    \coordinate (input); 
    \node[draw,very thick,C4, fill=C4!50, trapezium, trapezium angle=75, shape border rotate=270, minimum
    width=0.5cm, minimum height=0.25cm, right=1cm of data, label=below:$f_{D2\mathcal{H}}$] (fd2h) {};
    \node[draw,very thick,C9, fill=C9!50, trapezium, trapezium angle=75, shape border rotate=270, minimum
        width=1cm, minimum height=0.5cm, right=0.8cm of twoonehots, label=below:$f_\phi$] (fphi) {};
        
    \node[right=0.8cm of fphi, yshift = 0cm] (phis) {$\phi_1$ $\phi_2$ $\phi_3$};

    \node[right=0.8cm of phis, yshift = -0.25cm] (qphi1) {$z_3 \sim q_{\phi_3}(z \vert x)$};
    \node[right=0.8cm of phis, yshift =0.25cm] (qphi3) {$z_1 \sim q_{\phi_1}(z \vert x)$};

     \node[draw,very thick,C1, fill=C1!50, trapezium, trapezium angle=75, shape border rotate=90, minimum
        width=1cm, minimum height=0.5cm, right=6cm of fphi, label=below:Decoder] (decoder) {};

    \node[right=0.8cm of decoder, yshift =0cm] (theta) {$\theta$};
        
    \node[right=0.8cm of theta, yshift =0cm] (elbo) {$\widetilde{\mathcal{L}}_\text{\modelname{S2A}}$};
    
    \draw[-latex] (data) to ([yshift=0.0cm]fd2h.west);
    \draw[-latex] (fd2h) to ([yshift=0.0cm]hidden.west);

    \draw[-latex] (hidden) -- ++(0.5,0) |- (twoonehots);
      \draw[-latex] (phis) -- ++(1,0) -- ++(0,-1)-- ++(6.7,0) -- (elbo);
      
        \draw[-latex] ([xshift=-0.5cm]qphi3.north) -- ++(0,0.4) -- ++(0,0.4) -- ++(5.5,0) -- (elbo);
 
 \draw[-latex] ([xshift=-0.5cm]onehot.north) -- ++(0,0.4) -- ++(0,1) -- ++(9.85,0) -- ++(0,-1)-- ([xshift=0cm, yshift=0.7cm]qphi1.west) ;
    
    \draw[-latex] (theta) to (elbo);
    \draw[-latex] (twoonehots) to (fphi);
    \draw[-latex] (fphi) to (phis);
    \draw[-latex] (phis) to ([yshift=0.25cm]qphi1.west);
    \draw[-latex] ([yshift=0.25cm]qphi1.east) to (decoder);
    \draw[-latex] (decoder) to (theta);

\end{tikzpicture}
\begin{tikzpicture}
    \node[yshift=0.25cm, xshift = -6cm] (onehot) {$\mathrm{O}_\mathrm{A} = \begin{bmatrix} 1 & 1 & 1 \end{bmatrix}$};
    \node[yshift=-0.5cm, xshift = -7cm] (data) {$\bx$};
    \node[yshift=-0.5cm, xshift = -4.5cm] (hidden) {$\bh$};
    \node[yshift=0.25cm, xshift = -2.5cm] (twoonehots) {$\begin{bmatrix} \bh\\1\\ 0 \\ 0 \end{bmatrix}$ $\begin{bmatrix} \bh\\0\\ 1 \\ 0 \end{bmatrix}$  $\begin{bmatrix} \bh\\0\\ 0 \\ 1 \end{bmatrix}$};
    
    \coordinate (input); 
    \node[draw,very thick,C4, fill=C4!50, trapezium, trapezium angle=75, shape border rotate=270, minimum
    width=0.5cm, minimum height=0.25cm, right=1cm of data, label=below:$f_{D2\mathcal{H}}$] (fd2h) {};
    \node[draw,very thick,C9, fill=C9!50, trapezium, trapezium angle=75, shape border rotate=270, minimum
        width=1cm, minimum height=0.5cm, right=0.5cm of twoonehots, label=below:$f_\phi$] (fphi) {};
        
    \node[right=0.5cm of fphi, yshift = 0cm] (phis) {$\phi_1$ $\phi_2$ $\phi_3$};

    \node[right=0.8cm of phis, yshift = -0.35cm] (qphi1) {$z_3 \sim q_{\phi_3}(z \vert x)$};
     \node[right=0.8cm of phis, yshift = -0] (qphi2) {$z_2 \sim q_{\phi_2}(z \vert x)$};
    \node[right=0.8cm of phis, yshift =0.35cm] (qphi3) {$z_1 \sim q_{\phi_1}(z \vert x)$};

     \node[draw,very thick,C1, fill=C1!50, trapezium, trapezium angle=75, shape border rotate=90, minimum
        width=1cm, minimum height=0.5cm, right=6cm of fphi, label=below:Decoder] (decoder) {};
    \node[right=0.8cm of decoder, yshift =0cm] (theta) {$\theta$};
        
    \node[right=0.8cm of theta, yshift =0cm] (elbo) {$\widetilde{\mathcal{L}}_\text{\modelname{A2A}}$};
    
    \draw[-latex] (data) to ([yshift=0.0cm]fd2h.west);
    \draw[-latex] (fd2h) to ([yshift=0.0cm]hidden.west);

    \draw[-latex] (hidden) -- ++(0.5,0) |- (twoonehots);
      \draw[-latex] (phis) -- ++(1,0) -- ++(0,-1)-- ++(7,0) -- (elbo);
      
        \draw[-latex] ([xshift=-0.5cm]qphi3.north) -- ++(0,0.4) -- ++(0,0.4) -- ++(5.8,0) -- (elbo);

    \draw[-latex] (theta) to (elbo);
    \draw[-latex] (onehot) to (twoonehots);
    \draw[-latex] (twoonehots) to (fphi);
    \draw[-latex] (fphi) to (phis);
    \draw[-latex] (phis) to (qphi2);
    \draw[-latex] (qphi2.east) to (decoder);
    \draw[-latex] (decoder) to (theta);

\end{tikzpicture}
\caption{Block diagram illustrating the estimation of MISELBO with \modelname{MISVAE}, showcasing the $\mathrm{S2A}$ estimator (top) with $S=2$ and $A=3$, alongside the $\mathrm{A2A}$ estimator (bottom) with $A=3$.}
\label{fig:misvae-architecture-s2a-a2a}
\vspace{-0.1cm}
\end{figure}

\section{Additional Experimental Details}
\subsection{Training Infrastructure}
All experiments were conducted on a NVIDIA RTX $4090$s with $24$ GiB of memory each using the PyTorch framework \citep{pytorch}.

\textbf{MNIST \cite{lecun-mnisthandwrittendigit-2010}.} When training on the MNIST image dataset, $f_{D2\mathcal{H}}$ was defined as a sequence of five gated convolutional layers. To learn $f_\phi$, which amortizes the mean and covariance matrices of the variational posterior, we employed two separate non-linear networks. Each network consisted of an input layer, an output layer, and a hidden layer, the latter featuring 40 latent dimensions equipped with ReLU activation functions. We used a single layer Pixel CNN decoder. For optimization, we used Adam \cite{kingma2017adam}, with a learning rate of $0.0005$, and a batch size of $100$ and initiated the process with a KL-warmup phase lasting $100$ epochs. 

\textbf{CIFAR-10 \cite{cifar}.} For $f_{D2\mathcal{H}}$ we used a pre-trained ResNet model\footnote{https://github.com/akamaster/pytorch\_resnet\_cifar10/tree/master} with $20$ layers. $f_\phi$ was defined the same way as on MNIST, except we used $128$ latent dimensions. We optimized using Adam, with a learning rate of $0.001$, and a batch size of $100$ and initiated the training with KL-warmup during $500$ epochs. We used a Pixel CNN decoder with $4$ layers.

\section{Additional Experimental Results}\label{app:exp_det}

\subsection{Toy Experiment}

\begin{figure}[H]
   \centering 
     \includegraphics[width=0.4\linewidth]{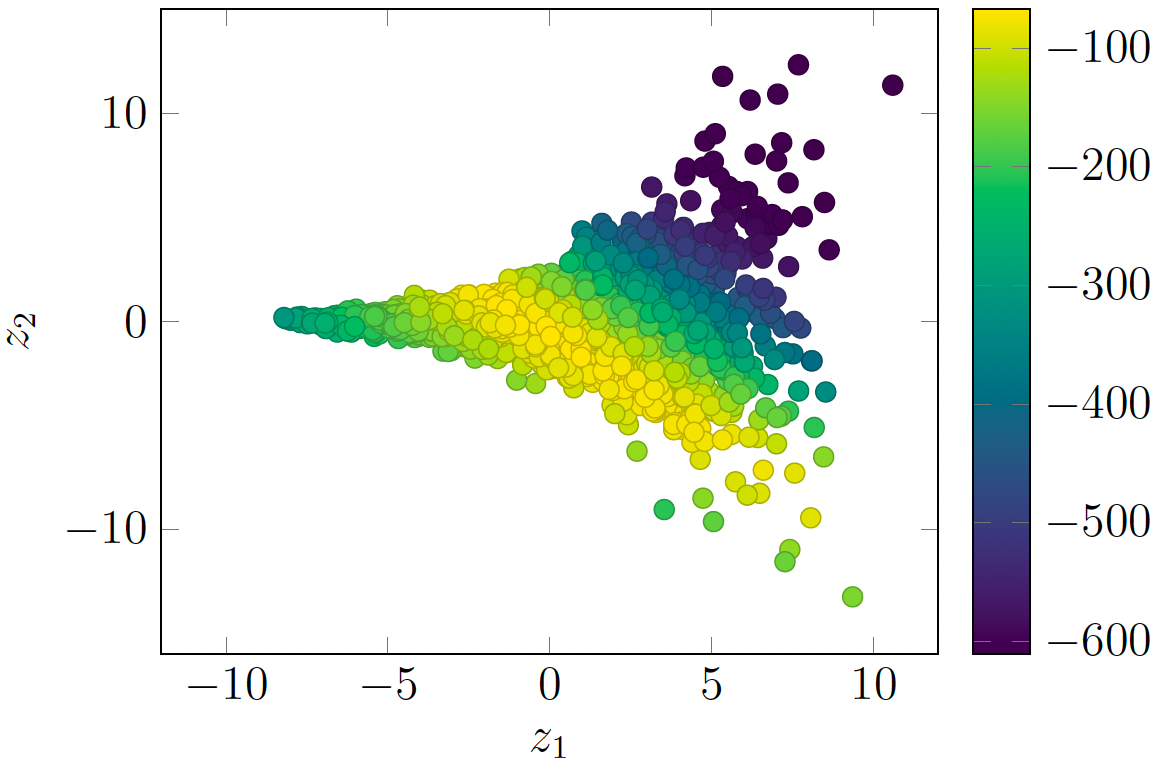}
    \caption{An unnormalized posterior from the toy example when $d_x=20$ and $N=5$. The plot is generated by sampling from the prior and evaluating the unnormalized log-probabilities of each sample (darker is lower).}
    \label{fig:ar_toy_posterior}
    \vspace{-5pt}
\end{figure}

\subsubsection{The Exponential-Decay-Bernoulli-Likelihood-and-Neal-Funnel-Prior Model}
\label{app:toy_example}
We generated a dataset $\mathcal{D}=\{x_n\}_{n=1}^5$ by sampling from the model. The approximations were learned using the Adam optimizer \cite{kingma2014adam} with learning rate equal to 0.001. The other optimizer parameters were set to their (PyTorch) default values. The variational parameters were initialized using the Kaiming-uniform initialization \cite{he2015delving}, the number of training iterations were 50k. All estimators used the same number of importance samples when estimating the MISELBO scores shown in the curve figures.

\subsubsection{Toy Experiment - Latent Space Visualization}
The approximations are visualized in the latent spaces shown in figures \ref{fig:latent_vis1}-\ref{fig:latent_vis4}. Notably, when $A=5$, the approximations from \modelname{S2A} and \modelname{A2A} are identical (recall that \modelname{S2A} required substantially less inference time, see in Sec. \ref{sec:experiments}). Meanwhile, the components in the \modelname{S2S} approximations were not able to sufficiently separate themselves.

Finally, in Fig. \ref{fig:latent_vis4}, we visualize how the approximation learned with the \modelname{S2S} estimator behaves when $S=5$ and $S=20$. The approximation obtains impressive MISELBO scores (shown in Sec. \ref{sec:experiments}) although the components largely overlapping.

\begin{figure}[H]
    \centering
    \includegraphics[width=0.5\linewidth]{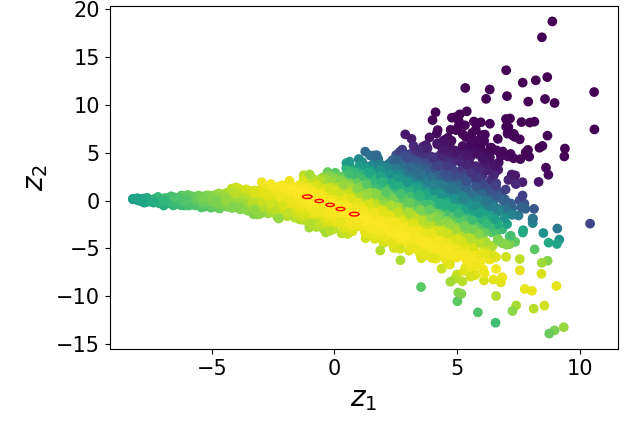}
    \caption{$dx=20$: approximation when using the \modelname{S2A} estimator, $S=2$ and $A=5$.}
    \label{fig:latent_vis1}
\end{figure}
\begin{figure}
    \centering
    \includegraphics[width=0.5\linewidth]{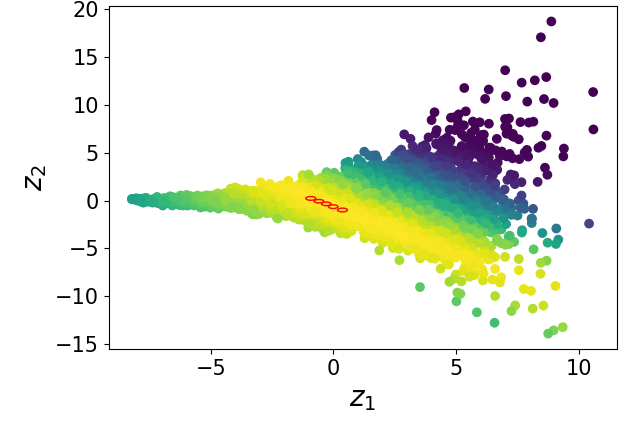}
    \caption{$dx=20$: approximation when using the \modelname{S2S} estimator, $S=2$ and $A=5$.}
    \label{fig:enter-label1}
\end{figure}
\begin{figure}
    \centering
    \includegraphics[width=0.5\linewidth]{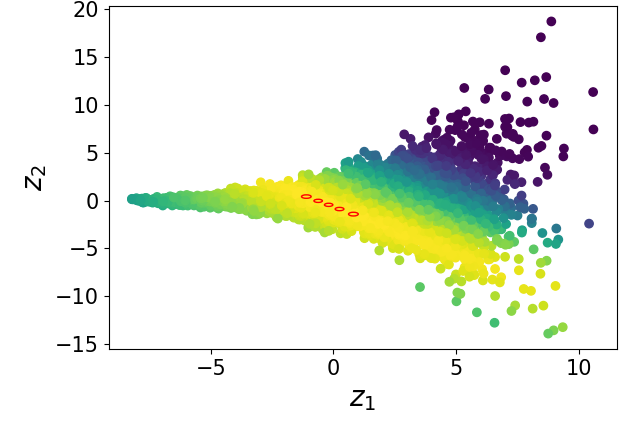}
    \caption{$dx=20$: approximation when using the \modelname{A2A} estimator, $S=5$ and $A=5$.}
    \label{fig:enter-label2}
\end{figure}

\begin{figure}
    \centering
    \includegraphics[width=0.5\linewidth]{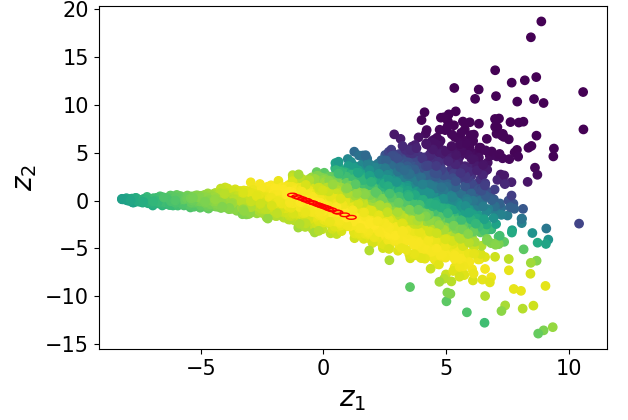}
    \caption{$dx=20$: approximation when using the \modelname{S2S} estimator, $S=5$ and $A=20$.}
    \label{fig:latent_vis4}
\end{figure}

\subsection{Additional $\mathrm{S2S}$ Results with big $A$}

In Fig. \ref{fig:increasing_S_for_s2s}a, we showcase how the final test set Negative Log Likelihood (NLL) value is impacted by setting the total number of mixtures to a large fixed value \( A=50 \) and gradually increasing the number of components used by the \( \text{S2S} \) estimator. The performance of \( \text{S2S} \) is compared to the \( \text{A2A} \) estimator, where we instead let \( A=S \) of the former estimator and gradually increase \( A \). For small values of \( S \), \( \text{S2S} \) exhibits a clear performance advantage both in terms of NLL and inference time per epoch (not shown here). However, as \( S \) approaches \( A \), the NLL performance advantage of \( \text{S2S} \) diminishes compared to \( \text{A2A} \).

In Fig. \ref{fig:increasing_S_for_s2s}b, we let \( S \) be a fixed small value and gradually increase \( A \) for the \( \mathrm{S2S} \) estimator. For both curves, an initial increase in \( A \) results in significant performance gains in terms of NLL; however, beyond a certain point, adding more mixtures yields no further improvements.

\begin{figure}[ht]
\centering

\begin{minipage}{0.4\textwidth}
\centering
(a)  \\
 \vspace{5pt}
  \centering
  \begin{tikzpicture}[trim axis left, trim axis right]
    \begin{axis}[
        xlabel={$S$},
         ylabel={$  -\log p_{\theta}(x)$},
        legend pos=north east,
        major grid style={dotted,black},
        ymajorgrids=true, 
        width=0.999\linewidth, 
        legend style={at={(0.5,-0.3)},
        anchor=north,legend columns=2, font=\small},
    ]
 \addplot[
        color=\Sfour,
        mark options={solid},
        mark=*,
        very thick
        ]
  coordinates {
        (2,79.05084711)
         (4,78.49139702)
         (10,77.87825156)
         (20,76.6404249)
        };
         \addlegendentry{\modelname{S2S} with $A=50$}
                                  \addplot[
        color=\Stwo,
        mark options={solid},
        mark=*,
        very thick
        ]
  coordinates {
        (2,78.59316406)
         (4,78.19854844)
         (10,77.69113906)
         (20,76.67368125)
        };
        \addlegendentry{\modelname{A2A} with $A=S$}
    \end{axis}
      
  \end{tikzpicture}
\end{minipage}
\begin{minipage}{0.4\textwidth}
\centering
(b) \\
 \vspace{5pt}
  \centering
  \begin{tikzpicture}[trim axis left, trim axis right]
    \begin{axis}[
        xlabel={$A$},
        ylabel={$-\log p_{\theta}(x)$},
        legend pos=north east,
        major grid style={dotted,black},
        ymajorgrids=true, 
        width=0.999\linewidth, 
        legend style={at={(0.5,-0.3)},
        anchor=north,legend columns=2, font=\small},
    ]

    \addplot[
        color=\Sone,
        mark options={solid},
        mark=*,
        very thick
    ]
    coordinates {
        (2,79.65321958)
        (3, 79.61649998)
        (4,79.64138781)
        (8, 78.57191406)
        (10, 78.58616554)
        (12, 78.59508802)
        (20,78.61388918)
        (50, 78.6070626)
    };
    \addlegendentry{$\mathrm{S2S}$ with $S=2$}

    \addplot[
        color=\Sthree,
        mark options={solid},
        mark=*,
        very thick
    ]
    coordinates {
        (4,78.49139702)
        (10, 78.17710156)
        (20, 78.16761953)
        (50,78.19854844)
    };
    \addlegendentry{$\mathrm{S2S}$ with $S=4$}
    
    \end{axis}
  \end{tikzpicture}
\end{minipage}

\caption{Analysis of MISELBO estimation using the \modelname{S2S} estimator. (a) With the number of mixtures \( A \) set to a constant value, we incrementally increase the number of components \( S \), observing the impact on estimation accuracy. (b) Conversely, we maintain a constant number of components \( S \) while progressively increasing the number of mixtures \( A \) to assess the benefits of additional mixtures on the estimation process.}
\label{fig:increasing_S_for_s2s}
\end{figure}
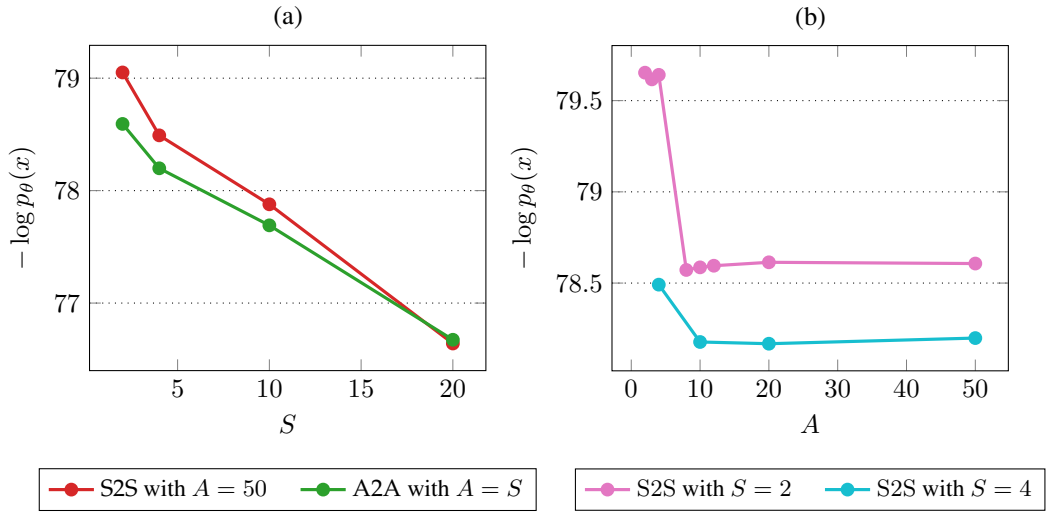

\subsection{Phylogenetics Experiment}

\subsubsection{Details of Variational Bayesian Phylogenetic Inference using mixtures and black-box variational inference}
\label{app:vbpi}
The task in Bayesian phylogenetic inference, is to approximate the posterior distribution over branch lengths, $\mathcal{B}$, and tree topologies, $\tau$, given the observed sequence data (typically DNA data), $x$. The phylogenetic posterior is thus defined as
\begin{equation}
    p(\tau, \mathcal{B}|x) = \frac{p(x| \tau, \mathcal{B})p(\mathcal{B}|\tau)p(\tau)}{p(x)},
\end{equation}
where the marginal likelihood, $p(x)$, is intractable.

Based on algorithms of \cite{zhang2018variational, zhang2019variational, zhang2020improved, zhang2022variational, zhang2023learnable, zhou2023phylogfn} we utilized the S2A and S2S to speed up the improvements made with mixtures in \cite{kviman2023improved}. The improvements follow the same structure as the original paper. In summary, the Subsplit Bayesian Networks (SBNs; \citet{zhang2018variational}) are utilized to learn tree topologies in a Bayesian phylogenetic context. An SBN employs a lookup table containing probabilities of subsplits (partial tree structures), referred to as a Conditional Probability Table (CPT). This table is learned through BBVI, and once the CPT is established, the SBN provides a tractable probability distribution over tree topologies, enabling sampling from this distribution. The parameters of SBNs are learned in the VBPI (Variational Bayesian Phylogenetic Inference) framework by maximizing MISELBO using VIMCO for variance reduction of the gradient. 

VIMCO (Variational Inference for Monte Carlo Objectives), the VBPI-Mixtures algorithm is a novel development in Bayesian phylogenetics. It demonstrates that mixtures of SBNs can approximate distributions unattainable by a single SBN, providing more accurate models of complex phylogenetic datasets. The VIMCO estimator, specifically derived for mixtures, enhances this approach. This estimator enables the VBPI-Mixtures algorithm to jointly explore the tree-topology space more effectively, leading to state-of-the-art results on various real phylogenetics datasets. Thus, mixtures of SBNs, coupled with the VIMCO estimator, significantly improve the accuracy of approximations of the tree-topology posterior in Bayesian phylogenetic inference. In this article, we took these improvements and combined them with the novel improvements of S2A and S2S to significantly speed up the training process with minimal to no loss in performance, resulting in a scalable solution that can be used for more advance phylogenetic problems.

\subsection{Generated Images}
\label{app:generated_imgs}
To assess the generative capabilities of our models, we have included visualizations of images generated from the MNIST and FashionMNIST datasets in Fig.~\ref{fig:gen_imgs}.

\begin{figure}
    \centering
\includegraphics{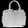 } \includegraphics{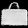 } \includegraphics{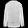 } \includegraphics{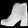 } \includegraphics{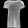 } \includegraphics{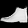 } \includegraphics{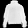 } \includegraphics{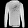 } \includegraphics{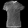 }
\\
\vspace{0.1cm}
\includegraphics{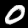}\includegraphics{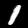} \includegraphics{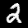} \includegraphics{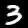} \includegraphics{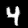} \includegraphics{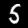} \includegraphics{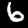} \includegraphics{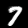} \includegraphics{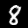} \includegraphics{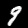}    \caption{\textbf{Top:} Images generated with a variational posterior with 400 mixture components (A=400), trained with the S2A estimator and S=1 on FashionMNIST. \textbf{Bottom:} Images generated with a variational posterior with 600 mixture components (A=600), trained with the S2A estimator and S=1 on MNIST}
    \label{fig:gen_imgs}
\end{figure}

\subsection{Comparable performance on CIFAR-10}
\begin{table*}[!htbp]
  \caption{NLL statistics for various SOTA VAE architectures on the \textbf{CIFAR}-10 dataset. The Composite model is a SEMVAE model which incorporates hierarchical models, NFs and the VampPrior. For IWAE $L$ is the number of importance samples used during training. }
  \centering
  \begin{tabular}{lcc}
    \toprule

    Model     & NLL  \\
    \midrule
    
    NVAE \citep{vahdat2020nvae} & $2.93$\\
    PixelVAE++ \citep{sadeghi2019pixelvae++} & $2.90$\\
    MAE \citep{ma2019mae} & $2.95$& \\
    
    Vanilla SEMVAE ($S=12$; \citet{kviman2023cooperation}) & $4.83$\\
    
    CR-NVAE \citep{NEURIPS2021_6c19e0a6}  & $2.51$\\
    \midrule
    \modelname{MISVAE} $\mathrm{S2S}$ ($A=4, S=2$; \textbf{our})     & $3.23$\\
    \modelname{MISVAE} $\mathrm{S2A}$ ($A=4, S=2$; \textbf{our})     & $3.19$  \\
    \bottomrule
  \end{tabular}
\end{table*}

\newpage

\end{document}